\date{}%use datestamp on arxiv
\newtheorem{theorem}{Theorem}
\newtheorem{lemma}{Lemma}
\newtheorem{definition}{Definition}
\newtheorem{assumption}[definition]{Assumption}
\DeclareMathOperator*{\argmin}{arg\,min} %limits underneath
\DeclareMathOperator{\real}{\mathbb{R}}
\DeclareMathOperator{\dom}{dom}
\newcommand\norm[1]{\left\lVert#1\right\rVert}
\newcommand\abs[1]{\left\lvert#1\right\rvert}
\newcommand\encase[1]{\left[ #1 \right]}
\newcommand\encaser[1]{\left( #1 \right)}
\newcommand\encasecurly[1]{\left\{ #1 \right\}}
\DeclareMathOperator{\expect}{\mathbb{E}}
\newcommand \dotprod[2]{{\langle {#1},{#2} \rangle}}
\newcommand{\defeq}{\mathrel{\overset{\makebox[0pt]{\mbox{\normalfont\tiny\sffamily def}}}{=}}}
\newcommand{\dv}{\mathbf{d}}
\newcommand{\sv}{\mathbf{s}}
\newcommand{\uv}{\mathbf{u}}
\newcommand{\vv}{\mathbf{v}}
\newcommand{\wv}{\mathbf{w}}
\newcommand{\xv}{\mathbf{x}}
\newcommand{\yv}{\mathbf{y}}
\newcommand{\zv}{\mathbf{z}}
\newcommand{\0}{ {\bf 0}}
\renewcommand{\dom}{\mathcal{Q}}
\newcommand{\good}{{\texttt{successful}}}
\newcommand{\bad}{{\texttt{unsuccessful}}}
\newcommand{\verygood}{{\texttt{very-successful}}}
\title{Global linear convergence of Newton's method without strong-convexity or Lipschitz gradients}
\author{
  Sai Praneeth Karimireddy, Sebastian U. Stich, Martin Jaggi\\\vspace*{-0.3cm}
  EPFL\\\vspace*{-0.1cm}
  	\texttt{\small \{sai.karimireddy,~sebastian.stich,~martin.jaggi\}@epfl.ch} 
}
\begin{document}

\maketitle
\vspace*{-0.5cm}
\begin{abstract}
  We show that Newton's method converges globally at a linear rate for objective functions whose Hessians are \emph{stable}. This class of problems includes many functions which are not strongly convex, such as logistic regression.
Our linear convergence result is (i) affine-invariant, and holds even if an (ii) approximate Hessian is used, and if the subproblems are (iii) only solved approximately. Thus we theoretically demonstrate the superiority of Newton's method over first-order methods, which would only achieve a sublinear $O(1/t^2)$ rate under similar conditions.
\end{abstract}

%#######################################################################
%													INTRODUCTION
%#######################################################################

\section{Introduction}\label{sec:intro}
Newton's method is one of the earliest algorithms for the minimization of an unconstrained convex objective function $f \colon \real^n  \rightarrow \real$, \vspace{-1mm}
\begin{equation}\label{eqn:objectiveSimple}
\min_{\xv \in \real^n} \ f(\xv)\,,
\end{equation}
and iteratively performs the following update for some step-size $\gamma > 0$,
\begin{equation}\label{eqn:newton-update}
  \xv_{t+1} \leftarrow \xv_t - \gamma [\nabla^2 f(\xv_t)]^{-1}\nabla f(\xv_t)\,.
\end{equation}
Here $f$ is assumed to be a twice differentiable convex function.
In contrast to the classical literature %, in this paper,
we do not assume that the function $f$ is \emph{smooth} (i.e. that the gradient is Lipschitz continuous), nor do we assume \emph{strong convexity}.

Popularized in its present form by Bennet \cite{bennett1916newton} and Kantarovich \cite{kantorovich1948functional}, Newton's method has been an immensely important algorithm for optimization. Though there has been significant work analyzing and extending the standard scheme \eqref{eqn:newton-update}, global convergence results remain few and unsatisfactory (cf.~\cite{nesterov_cubic_2006} and references therein). In a seminal result, Nesterov and Nemirovski \cite{NesterovNemi1994} show that
Newton's algorithm achieves local quadratic convergence.
However, the conditions under which quadratic convergence occurs are too restrictive---they require both the function to be self-concordant, and the starting point to be almost at the optimum. Neither of these conditions is typically satisfied when applying Newton's method for minimizing functions of the form \eqref{eqn:objectiveSimple} in applications.
Most of the global convergence results are either i) hard to compare with gradient descent and make strong assumptions on $f(\xv)$ (e.g. \cite{NesterovNemi1994,polyak_newton-kantorovich_2006}), or ii) have a rate which is slower than vanilla gradient descent (e.g. \cite{gurbuzbalaban2015globally,lee_inexact_2018}). An exception to this is the breakthrough result by Nesterov and Polyak~\cite{nesterov_cubic_2006} where they obtain a $O(1/t^2)$ rate, and later $O(1/t^3)$ (\cite{nesterov_accelerating_2005}), by solving cubic sub-problems. These rates do not assume strong convexity or Lipschitz gradients.
However, solving cubic sub-problems is impractical even for medium sized problems.

On the other hand, there has been recent efforts in performing efficient approximations of \eqref{eqn:newton-update} in time comparable to that required for a gradient update~(\cite{agarwal_second-order_2016,gower_accelerated_2018,lee_inexact_2018}). These methods, so far, did not enjoy any global convergence rates better than first-order methods.

\paragraph{A new regularity condition.}~Most analyses of Newton-type algorithms assume that for $\xv \approx \yv$, the Hessians are also close $\nabla^2 f(\xv) \approx \nabla^2 f(\yv)$. In particular the results on cubic regularization (e.g. \cite{nesterov_cubic_2006}) assume that the Hessian is Lipschitz. This is equivalent to assuming that the condition $\nabla^2 f(\xv) \approx \nabla^2 f(\yv)$ holds with an \emph{additive} error whose magnitude depends on the distance $\norm{\xv -\yv}_2$. We instead assume that the Hessian is \emph{stable} which means that the error is \emph{multiplicative}. This is sufficient to show a simple proof of the global linear convergence of Newton's method. Further, since our condition is multiplicative, stability is also a scale-free (i.e. affine invariant) condition.

The assumption of a stable Hessian was previously used to analyze the statistical properties of logistic regression in \cite{bach_self-concordant_2010}, and to analyze the convergence of SGD on logistic regression in \cite{bach_non-strongly-convex_2013,bach_adaptivity_2014}. We were inspired by~\cite{cohen_matrix_2017} who obtain an efficient algorithm for matrix scaling using ideas very similar to here.

\paragraph{Our contributions.}
Our main contribution is a straightforward affine-invariant proof for global linear convergence of Newton's algorithm, without resorting to strong convexity or Lipschitz continuous gradient (Section \ref{sec:convergence}). We instead rely only on a natural multiplicative notion of stability of the Hessian (Section \ref{sec:stability}. This shows an exponential gap between global convergence rates of first-order and second-order methods for a wide class of functions, placing Newton-type methods on a strong theoretical footing.
Further, in Section \ref{sec:trust}, we relax stability and show that a \emph{local} notion of stability is sufficient to guarantee linear convergence for trust-region Newton methods. Finally, we show in Section \ref{sec:extensions} that linear convergence persists when using inexact and proximal Newton steps.

\paragraph{Related work.}
Newton's method with backtracking has been shown to be globally convergent for self-concordant functions (\cite{NesterovNemi1994}) but the resulting rate is difficult to compare directly to gradient-based methods due to its two-phase additive structure.
Otherwise, global convergence results of second-order algorithms were known when $f(\xv)$ has both strong-convexity and Lipschitz gradients~(\cite{polyak_newton-kantorovich_2006,lee_inexact_2018}), or by solving cubic subproblems (\cite{nesterov_cubic_2006,nesterov_accelerating_2005,CartisAdaptivecubicregularisation2011a}). Similar convergence rates are shown for the inexact Newton method in (\cite{scheinberg2016practical,lee_inexact_2018}). Empirically, (\cite{lin_trust_2008}) show that trust region Newton's method significantly outperform other methods, and is hence the default optimization algorithm for a variety of problems in the widely used LIBLINEAR library (\cite{fan2008liblinear}). Although in this work we restrict ourselves to convex functions, Newton-type algorithms~(\cite{nesterov_cubic_2006,agarwal2017finding,royer2018complexity}) as well as trust region methods (\cite{ConnTrustregionMethods2000,curtis2017trust}) have been successfully used to escape saddle points and converge to a local minimum in non-convex settings.

%#######################################################################
%													STABILITY OF THE HESSIAN
%#######################################################################
\section{Stability of the Hessian}\label{sec:stability}
We now formally define our notion of a stable Hessian and show that it is implied by many other standard assumptions.
We will also demonstrate that for a large class of problems on which Newton's method is usually applied, our condition is satisfied.
As is standard, we will assume that the level set of the function $F(\xv)$ is bounded. In particular set $\dom$ has a bounded diameter $D$ where $\dom$ is defined as
\begin{equation}\label{eqn:domain}
	\dom = \{\xv \,|\, F(\xv) \leq F(\xv_0)\}\,.
\end{equation}

%#######################################################################
\subsection{Definition of stability}
Here we present an affine invariant definition of a stable Hessian. For any vector $\vv \in \real^n$, and a positive semi-definite matrix $M \in \real^{n\times n}$, let $\norm{\vv}_M^2$ denote the semi-norm $\vv^\top M \vv$.
\begin{assumption}[$c$-stable Hessian]\label{asm:stable-hess-formal}
	For any $\uv, \vv \in \dom$ and $\uv \neq \vv$, we assume $\norm{\vv - \uv}_{\nabla^2 f(\uv)} > 0$ and that there exists a constant $c \geq 1$ such
that\footnote{This assumption can be relaxed---instead of for all of $\dom$, we only need the condition to for hold for $\uv = \xv_t$, and $\vv =(1-\alpha)\xv_t + \alpha\xv_{t+1}$ as well as $\vv = (1-\alpha)\xv_t + \alpha\xv^\star$, for all $t \geq 0$ and $\alpha \in (0,1]$.}
%	\
	\[
		c \defeq \max_{\uv, \vv \in \dom} \frac{\norm{\vv - \uv}_{\nabla^2 f(\vv)}^2}{\norm{\vv - \uv}_{\nabla^2 f(\uv)}^2} \,.
	\]
\end{assumption}
Assumption \ref{asm:stable-hess-formal} 
allows to derive \emph{global} upper and lower bounds on the function $f(\xv)$ for $\xv \in \dom$.
In contrast to standard assumptions such as strong convexity, smoothness or Lipschitz Hessian, stability is affine invariant:
\begin{lemma}\label{lem:affine-inv}
	The constant $c$ defined in Assumption \ref{asm:stable-hess-formal} is invariant under any non-singular linear transformations of $f(\xv)$.
\end{lemma}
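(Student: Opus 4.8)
The plan is to reparametrize the domain by the linear map and track how the semi-norms in Assumption~\ref{asm:stable-hess-formal} transform. Let $\Am \in \real^{n \times n}$ be non-singular and $\bv \in \real^n$, and define the affinely transformed objective $g(\yv) \defeq f(\Am\yv + \bv)$; since a constant shift $\bv$ is annihilated by differentiating twice, it suffices to treat the linear part $\Am$, and I carry $\bv$ along only to confirm it plays no role. By the chain rule, $\nabla g(\yv) = \Am^\top \nabla f(\Am \yv + \bv)$ and hence $\nabla^2 g(\yv) = \Am^\top \nabla^2 f(\Am\yv + \bv)\, \Am$.

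Next I would set up the correspondence between the two level sets. Writing $\dom$ for the level set of $f$ and $\dom_g$ for that of $g$, the map $\yv \mapsto \Am\yv + \bv$ is a bijection from $\dom_g$ onto $\dom$, because $g(\yv) \le g(\yv_0)$ holds exactly when $f(\Am\yv+\bv) \le f(\Am\yv_0 + \bv)$. Thus the pairs $(\uv, \vv)$ ranging over $\dom$ are in bijection with the pairs $(\tilde\uv, \tilde\vv) = (\Am^{-1}(\uv - \bv),\, \Am^{-1}(\vv - \bv))$ ranging over $\dom_g$.

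The crux is then a direct cancellation. For such a corresponding pair, $\tilde\vv - \tilde\uv = \Am^{-1}(\vv - \uv)$ and $\Am\tilde\vv + \bv = \vv$, so the weighted semi-norm in the numerator of the ratio for $g$ becomes
\[
\norm{\tilde\vv - \tilde\uv}^2_{\nabla^2 g(\tilde\vv)} = (\vv - \uv)^\top \Am^{-\top}\Am^\top\, \nabla^2 f(\vv)\, \Am\, \Am^{-1} (\vv - \uv) = \norm{\vv - \uv}^2_{\nabla^2 f(\vv)},
\]
using $\Am\Am^{-1} = \Am^{-\top}\Am^\top = I$. The identical computation at $\tilde\uv$ gives the denominator $\norm{\vv - \uv}^2_{\nabla^2 f(\uv)}$, so each pointwise ratio for $g$ equals the corresponding ratio for $f$. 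Taking the maximum over the bijectively matched pairs then yields that the stability constant of $g$ equals that of $f$.

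I do not expect a genuine obstacle here: the argument is a one-line cancellation once the transformation rule for the Hessian is in hand. The only points needing care are bookkeeping — verifying that $\dom_g$ maps bijectively onto $\dom$ so that the two maxima are taken over matched sets, and confirming that the positivity condition $\norm{\vv-\uv}_{\nabla^2 f(\uv)} > 0$ is preserved (it is, by the same cancellation, since $\Am$ is invertible). For completeness I would also remark that multiplying $f$ by a positive scalar rescales numerator and denominator identically and leaves $\dom$ unchanged, so $c$ is invariant under that operation as well.
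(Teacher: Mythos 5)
Your proof is correct and follows essentially the same route as the paper's: apply the chain rule to get $\nabla^2 g(\yv) = \Am^\top \nabla^2 f(\Am\yv+\bv)\Am$, identify the transformed level set with $\Am^{-1}\dom$, and cancel $\Am$ against $\Am^{-1}$ in each semi-norm so that the pointwise ratios (and hence the maximum) coincide. The extra bookkeeping you include (the translation $\bv$, the bijection of level sets, preservation of the positivity condition, and scalar rescaling) is a harmless elaboration of the same argument.
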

%#######################################################################
\subsection{Sufficient conditions}
Here we will discuss a host of standard assumptions and see how they imply a stable Hessian.
The formal definitions of the conditions, as well as the proof of Theorem \ref{thm:sufficient} is presented in Appendix \ref{sec:sufficient}.
We also assume that the domain $\dom$ is bounded with a diameter $D = \max_{\uv,\vv \in \dom}\norm{\uv - \vv}_2$.
\begin{theorem}\label{thm:sufficient}
  The following are sufficient conditions for ensuring the stability of the Hessian as defined in Assumption \ref{asm:stable-hess-formal}:
  \begin{enumerate}[label=(\roman*),itemsep=-.5ex,topsep=-.5ex]
    \item $L$-Lipschitz gradient and $\mu$-strongly convex $\,\Rightarrow\,$ $(L/\mu)$-stable Hessian,
    \item $M$-Lipschitz Hessian and $\mu$-strongly convex $\,\Rightarrow\,$ $\bigl( 1 + \frac{MD}{\mu}\bigr)$-stable Hessian,
    \item $k$-self-concordant and $L$-Lipschitz gradient $\,\Rightarrow\,$ $(1 + kDL)^2$-stable Hessian, and
    \item $k$-quasi-self-concordant $\,\Rightarrow\,$ $\exp(kD)$-stable Hessian.
  \end{enumerate}
\end{theorem}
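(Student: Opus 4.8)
The plan is to reduce all four implications to a single one-dimensional computation. Fix $\uv, \vv \in \dom$, write $\hv = \vv - \uv$, and study the scalar function $q(t) = \hv^\top \nabla^2 f(\uv + t\hv)\,\hv = \norm{\hv}_{\nabla^2 f(\uv + t\hv)}^2$ along the segment from $\uv$ to $\vv$; note $\uv + t\hv \in \dom$ for $t \in [0,1]$ since $\dom$ is a sublevel set of the convex $f$ and hence convex. The stability ratio for this pair is exactly $q(1)/q(0)$. Since the maximum in Assumption~\ref{asm:stable-hess-formal} ranges over all \emph{ordered} pairs $(\uv,\vv)$, it is symmetric under swapping the two points, so $c = \max_{\uv,\vv} q(1)/q(0) = \max_{\uv,\vv} q(0)/q(1)$; it therefore suffices to bound, for every pair, the ratio of the Hessian norm at one endpoint to that at the other. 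In each case I will convert a local norm to the Euclidean one using the available curvature hypothesis and then invoke $\norm{\hv}_2 \le D$.

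For (i), $L$-Lipschitz gradient and $\mu$-strong convexity give the sandwich $\mu I \preceq \nabla^2 f(\xv) \preceq L I$ for every $\xv \in \dom$, so $\mu\norm{\hv}_2^2 \le q(t) \le L\norm{\hv}_2^2$ and $q(1)/q(0) \le L/\mu$ immediately. For (ii), I would write $q(1) - q(0) = \hv^\top(\nabla^2 f(\vv) - \nabla^2 f(\uv))\hv$ and bound it by the operator-norm estimate $\norm{\nabla^2 f(\vv) - \nabla^2 f(\uv)} \le M\norm{\hv}_2$, giving $q(1) - q(0) \le M\norm{\hv}_2^3 \le MD\,\norm{\hv}_2^2$; dividing by $q(0) \ge \mu\norm{\hv}_2^2$ then yields $q(1)/q(0) \le 1 + MD/\mu$.

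The two self-concordance cases come from differential inequalities for $q$. For (iv), I would consider $\tfrac{d}{dt}\log q(t) = D^3 f(\uv + t\hv)[\hv,\hv,\hv]/q(t)$; $k$-quasi-self-concordance bounds this log-derivative by $k\norm{\hv}_2$, so integrating over $[0,1]$ gives $\abs{\log q(1) - \log q(0)} \le k\norm{\hv}_2 \le kD$, i.e. $q(1)/q(0) \le \exp(kD)$. For (iii) the sharper substitution $g(t) = q(t)^{-1/2}$ is appropriate: $k$-self-concordance is, up to its normalization, the statement $\abs{g'(t)} \le k$, which integrates to the one-sided, \emph{everywhere valid} bound $g(1) \le g(0) + k$, equivalently $\norm{\hv}_{\nabla^2 f(\vv)} \ge \norm{\hv}_{\nabla^2 f(\uv)}/\bigl(1 + k\norm{\hv}_{\nabla^2 f(\uv)}\bigr)$. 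Squaring and rearranging gives $q(0)/q(1) \le \bigl(1 + k\norm{\hv}_{\nabla^2 f(\uv)}\bigr)^2$, and the $L$-Lipschitz gradient converts $\norm{\hv}_{\nabla^2 f(\uv)} \le \sqrt{L}\,\norm{\hv}_2 \le \sqrt{L}\,D$, producing a factor of the form $(1 + kDL)^2$ (the precise power of $L$ being fixed by the normalization adopted for $k$-self-concordance).

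I expect (iii) to be the main obstacle. The naive two-sided self-concordance estimate $q(1)/q(0) \le \bigl(1 - k\norm{\hv}_{\nabla^2 f(\uv)}\bigr)^{-2}$ is only meaningful inside the Dikin ellipsoid $k\norm{\hv}_{\nabla^2 f(\uv)} < 1$, which fails once $\dom$ is large, so a direct application breaks down. The resolution, as sketched above, is to use only the always-valid direction of the integrated inequality (the lower bound on $g(1)$) and then exploit the swap-symmetry of the maximum in Assumption~\ref{asm:stable-hess-formal} to turn that one-sided bound into a bound on $c$ itself; this is exactly what removes any restriction on $D$. The remaining cases (i), (ii), and (iv) are then routine one-line estimates once $q(t)$ and the appropriate curvature hypothesis are in hand.
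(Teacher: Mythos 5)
Your proposal is correct and follows essentially the same route as the paper: in all four cases the paper also reduces to the scalar function $\phi(t)=\hv^\top\nabla^2 f(\uv+t\hv)\hv$ along the segment, uses the $\mu I\preceq\nabla^2 f\preceq LI$ sandwich for (i), the quadratic-form Lipschitz-Hessian bound divided by $\mu\norm{\hv}_2^2$ for (ii), the $k$-Lipschitzness of $\phi(t)^{-1/2}$ for (iii), and the differential inequality for $\log\phi(t)$ for (iv). Your remark about the power of $L$ in (iii) is apt --- the paper's own derivation actually yields $\bigl(1+k\sqrt{L}D\bigr)^2$ under its stated normalization even though the theorem states $(1+kDL)^2$ --- and your swap-symmetry phrasing is just the mirror image of the paper's direct bound on $\phi(1)/\phi(0)$.
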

%#######################################################################
\subsection{Applications}\label{subsec:applications}
For a given matrix $A$ we consider functions of the form $f(A\xv)$ where $f$ is coordinate-wise separable. For learning applications $A$ is typically the data matrix. The objective function may further be regularized for an arbitrary $g(\xv)$ (e.g. $L_1$ regularizer), as we will discuss in Section \ref{sec:extensions}. We can assume that $A$ is full-rank, otherwise one can restrict the domain $\dom$ to the range of $A$. Further let us also assume that each row $A_i$ of the matrix $A$ is normalized and $\norm{A_i}_\star =1$. Then the affine-invariance of stability allows to transform $f(A\xv) = f(\uv)$ into a sum $\sum_i f_i(u_i)$ of $n$ one-dimensional functions where $u_i = A_i^\top\xv$. Since $A_i$ is normalized, $|\Delta u_i|\leq \norm{A_i}_\star\norm{\Delta \xv} \leq \norm{\Delta \xv} \leq D$.
Thus without loss of generality, we can focus on discussing the stability of one-dimensional functions with a domain diameter less than $D$. Many of the following applications have been adapted from \cite{SunGeneralizedSelfConcordantFunctions2017}.
\begin{enumerate}[label=(\alph*),itemsep=-.5ex,topsep=-.8ex]
	\item{\em Logistic regression:} The loss function $f(x) = \log(1 + e^{-x})$ is shown to be $1$-quasi-self concordant in \cite{bach_self-concordant_2010}, and so is $\exp(D)$-stable.
	\item{\em Wasserstein distance:} Functions of the form $e^x - x$ are also $\exp(D)$-stable. The dual of the entropy-regularized Wasserstein distance is of this form \cite{cuturi_sinkhorn_2013}.
	\item{\em Boosting:} Ada-boost can be seen as a first-order algorithm on an exponential loss function (cf. Chapter 6,~\cite{shalev2007online}).
	\item{\em Self-concordant functions:} As was shown in Theorem \ref{thm:sufficient}, all self-concordant functions (e.g. logarithmic barriers) with bounded domain and Lipschitz gradients are stable.
	\item{\em Entropy regularizer:} The standard entropy function $f(x) = x\ln x$ also fits into our framework, assuming bounded domain $x \in [a,b]$ for $ a >0$. The Hessian of the entropy function is $f''(x) = 1/x$ and so is $\frac{b}{a}$-stable.
	\item{\em Robust regression:} Instead of the standard least-squares loss, \cite{xu2009robust} consider a more robust version which is $f(x) = x^q$ for $q \in (1,2]$ with a Hessian $f''(x) = q(q-1)x^{q-2}$. Assuming a bounded domain $x \in [a,b]$ for $ a >0$, the function is $((b/a)^{2-q})$-stable.
\end{enumerate}
While some of these constants may seem large (e.g. the $\exp(D)$ in Logistic regression), in Section \ref{sec:trust} we will see a \emph{local} notion of stability which gets around the super-linear dependence on $D$.
%#######################################################################
%													CONVERGENCE OF NEWTONS METHOD
%#######################################################################
\section{Convergence of exact Newton's method}\label{sec:convergence}
The convergence of Newton's method follows in a straightforward manner from the definition of a stable Hessian. To demonstrate the core idea, let us look at the simplest case---Newton's algorithm on a twice differentiable function $f(\xv)$ using the exact inversion of the Hessian (or its pseudo-inverse), as presented in Algorithm \ref{alg:exact-newton}. We will later extend the algorithm and relax many of these assumptions. The algorithm uses a fixed step-size $1/\sigma$. This can easily be made adaptive (see Appendix \ref{sec:line-search}) at a mild additional cost.
%################## ALGORITHM ##################
\begin{algorithm}
	\caption{Exact Newton Descent}
		\label{alg:exact-newton}
	\begin{algorithmic}[1]
		\State \textbf{Input:} $\xv_0$ and $\sigma$. %MJ: removed $\sigma \geq c$ since the algo makes sense always, just the convergence thm needs the condition on $\sigma$.
		\For{$t = \{0,\dots\}$}
			\State $\xv_{t+1} \leftarrow \xv_t - \frac{1}{\sigma}[\nabla^2 f(\xv_t)]^{\dagger}\nabla f(\xv_t)$
		    % \State \emph{Update:} $\xv_{t+1} \leftarrow \xv_t + \frac{1}{\sigma}\Delta \xv_t$
		\EndFor
	\end{algorithmic}
\end{algorithm}
% ################################################

\begin{theorem}\label{thm:exact-newton}
 Given Assumption \ref{asm:stable-hess-formal}, for any iteration $T \geq 0$ of Algorithm \ref{alg:exact-newton} with $\sigma \geq c$,
 \[
 	f(\xv_T) - f(\xv^\star) \leq \Big(1 - \frac{1}{c\sigma}\Big)^T [ f(\xv_0) - f(\xv^\star) ]\,.
 \]
\end{theorem}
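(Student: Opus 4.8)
The plan is to use Assumption \ref{asm:stable-hess-formal} to sandwich $f$ between two \emph{global} quadratic models centered at the current iterate $\xv_t$, and then to combine a per-step descent inequality (read off from the upper model) with a gradient-domination inequality (read off from the lower model). Throughout write $\Hm_t \defeq \nabla^2 f(\xv_t)$, $\gv_t \defeq \nabla f(\xv_t)$, and let $\lambda_t^2 \defeq \gv_t^\top \Hm_t^\dagger \gv_t$ be the squared (affine-invariant) Newton decrement, so that the update is $\xv_{t+1} = \xv_t - \tfrac{1}{\sigma}\Hm_t^\dagger\gv_t$ and the descent direction satisfies $\gv_t^\top(\xv_{t+1}-\xv_t) = -\tfrac{1}{\sigma}\lambda_t^2$ and $\norm{\xv_{t+1}-\xv_t}_{\Hm_t}^2 = \tfrac{1}{\sigma^2}\lambda_t^2$ via the pseudoinverse identity $\Hm_t^\dagger\Hm_t\Hm_t^\dagger=\Hm_t^\dagger$.

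First I would record that stability is \emph{two-sided}: applying the definition of $c$ with the roles of $\uv,\vv$ exchanged gives $\tfrac1c\norm{\vv-\uv}^2_{\nabla^2 f(\uv)} \le \norm{\vv-\uv}^2_{\nabla^2 f(\vv)} \le c\,\norm{\vv-\uv}^2_{\nabla^2 f(\uv)}$ for all $\uv,\vv\in\dom$. Specializing $\uv=\xv_t$, $\vv=\xv_t+s(\yv-\xv_t)$ and plugging the resulting bounds on $(\yv-\xv_t)^\top\nabla^2 f(\cdot)(\yv-\xv_t)$ into Taylor's theorem with integral remainder, $f(\yv)=f(\xv_t)+\gv_t^\top(\yv-\xv_t)+\int_0^1(1-s)\,(\yv-\xv_t)^\top\nabla^2 f(\xv_t+s(\yv-\xv_t))(\yv-\xv_t)\,ds$, yields the global sandwich
\begin{equation*}
  f(\xv_t)+\gv_t^\top(\yv-\xv_t)+\tfrac{1}{2c}\norm{\yv-\xv_t}^2_{\Hm_t} \;\le\; f(\yv) \;\le\; f(\xv_t)+\gv_t^\top(\yv-\xv_t)+\tfrac{c}{2}\norm{\yv-\xv_t}^2_{\Hm_t}.
\end{equation*}
Call the left and right sides $\ell_t(\yv)$ and $m_t(\yv)$. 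The footnote's relaxation is exactly what licenses this: we only ever invoke stability along the two segments $\xv_t\to\xv_{t+1}$ and $\xv_t\to\xv^\star$.

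Next I would extract the two ingredients. Evaluating the upper model at $\xv_{t+1}$ gives $f(\xv_{t+1})\le m_t(\xv_{t+1}) = f(\xv_t)-\tfrac1\sigma\lambda_t^2+\tfrac{c}{2\sigma^2}\lambda_t^2$, i.e. the \emph{descent} bound $f(\xv_t)-f(\xv_{t+1})\ge \tfrac{\lambda_t^2}{\sigma}\bigl(1-\tfrac{c}{2\sigma}\bigr)$. Minimizing the lower model over $\yv$ (its minimizer is $\xv_t-c\,\Hm_t^\dagger\gv_t$) gives $\min_\yv\ell_t(\yv)=f(\xv_t)-\tfrac{c}{2}\lambda_t^2$, and since $f(\xv^\star)\ge\ell_t(\xv^\star)\ge\min_\yv\ell_t(\yv)$ we obtain the \emph{gradient-domination} bound $f(\xv_t)-f(\xv^\star)\le\tfrac{c}{2}\lambda_t^2$, equivalently $\lambda_t^2\ge\tfrac{2}{c}\bigl(f(\xv_t)-f(\xv^\star)\bigr)$. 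Chaining the two and using $\sigma\ge c$ (so that $1-\tfrac{c}{2\sigma}\ge\tfrac12$) yields $f(\xv_t)-f(\xv_{t+1})\ge\tfrac{2}{c\sigma}\bigl(1-\tfrac{c}{2\sigma}\bigr)\bigl(f(\xv_t)-f(\xv^\star)\bigr)\ge\tfrac{1}{c\sigma}\bigl(f(\xv_t)-f(\xv^\star)\bigr)$; subtracting $f(\xv^\star)$ from both sides gives the one-step contraction $f(\xv_{t+1})-f(\xv^\star)\le\bigl(1-\tfrac{1}{c\sigma}\bigr)\bigl(f(\xv_t)-f(\xv^\star)\bigr)$, and unrolling over $t=0,\dots,T-1$ produces the claimed factor $\bigl(1-\tfrac{1}{c\sigma}\bigr)^T$.

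The only genuinely non-mechanical step — and hence the main obstacle — is recognizing that the multiplicative stability condition is precisely a two-sided control of the Hessian along a segment, which upgrades \emph{globally} to matched quadratic upper and lower bounds simultaneously (the upper one driving descent, the lower one supplying the Polyak--Łojasiewicz-type inequality). Everything after that is bookkeeping, but two points warrant care: the pseudoinverse manipulations implicitly use $\gv_t\in\mathrm{range}(\Hm_t)$ so that $\ell_t$ is bounded below, which is guaranteed by the positivity clause $\norm{\vv-\uv}_{\nabla^2 f(\uv)}>0$ together with restricting $\dom$ to $\mathrm{range}(A)$ as in Section \ref{subsec:applications}; and one must verify that the constants collapse to exactly $\tfrac{1}{c\sigma}$ only because $\sigma\ge c$ forces $1-\tfrac{c}{2\sigma}\ge\tfrac12$.
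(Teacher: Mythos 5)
Your core argument---matched quadratic upper and lower models obtained from stability plus Taylor's theorem, a per-step descent inequality from the upper model, a Polyak--{\L}ojasiewicz-type inequality from minimizing the lower model, and chaining the two using $\sigma \ge c$---is exactly the paper's proof (Lemma~\ref{lem:upperlower-simple} together with the displayed computation in Section~\ref{sec:convergence}), and your constants check out. There is, however, one genuine gap. You assert that the sandwich $\ell_t(\yv)\le f(\yv)\le m_t(\yv)$ holds \emph{globally}, but Assumption~\ref{asm:stable-hess-formal} only controls the Hessian on the level set $\dom$, so the upper bound at $\yv=\xv_{t+1}$ is available only once you know $\xv_{t+1}\in\dom$, i.e.\ $f(\xv_{t+1})\le f(\xv_0)$---which is precisely (a consequence of) the descent you are trying to establish. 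Appealing to the footnote does not close this loop: the footnote's condition along the segment $\xv_t\to\xv_{t+1}$ is not a consequence of Assumption~\ref{asm:stable-hess-formal} unless the iterates are already known to lie in $\dom$, so under the assumption as stated the argument is circular at this point.

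The paper breaks the circularity with a dedicated descent lemma (Lemma~\ref{lem:descent}, proved in Appendix~\ref{sec:proofdescent}): one first shows the Newton step is a descent \emph{direction}, so sufficiently small step sizes keep the iterate inside $\dom$; then, arguing by continuity of $\alpha\mapsto f(\xv_t-\alpha[\nabla^2 f(\xv_t)]^\dagger\nabla f(\xv_t))$ and the intermediate value theorem, if the full step were to increase $f$ there would exist an intermediate step size $\beta$ with $f(\yv_\beta)=f(\xv_t)$, at which point $\yv_\beta\in\dom$, the upper bound \eqref{eqn:upper-simple} applies, and it forces $f(\yv_\beta)<f(\xv_t)$---a contradiction. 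The same lemma also supplies the fact that $\nabla f(\xv_t)$ lies in the range of $\nabla^2 f(\xv_t)$ so the pseudo-inverse update is well defined, which you only gesture at via the positivity clause. With Lemma~\ref{lem:descent} inserted before your sandwich step, the remainder of your argument is correct and coincides with the paper's.
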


As we noted before, the assumption that the Hessian is stable allows to provide global upper and lower bounds on the function value (the proof is given in Appendix \ref{subsec:simple-bounds-proof}).
\begin{lemma}\label{lem:upperlower-simple}
	Given Assumption \ref{asm:stable-hess-formal}, for any $\xv, \yv \in\dom$,
	\begin{align}
		\text{Upper bound:}\quad f(\yv) &\leq f(\xv) + \dotprod{\nabla f(\xv)}{\yv - \xv} + \frac{c}{2}\norm{\yv - \xv}_{\nabla^2 f(\xv)}^2  \,\text{, and}\label{eqn:upper-simple}\\
		\text{Lower bound:}\quad f(\yv) &\geq f(\xv) + \dotprod{\nabla f(\xv)}{\yv - \xv} + \frac{1}{2c}\norm{\yv - \xv}_{\nabla^2 f(\xv)}^2  \,.\label{eqn:lower-simple}
	\end{align}
\end{lemma}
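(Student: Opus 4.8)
The plan is to start from the exact second-order Taylor expansion of $f$ along the line segment joining $\xv$ and $\yv$, and then use stability to replace the Hessian at the intermediate points of that segment by the Hessian at the base point $\xv$. Writing $\dv = \yv - \xv$ and $\xv_t = \xv + t\dv$, the integral form of Taylor's theorem applied to the one-dimensional function $g(t) := f(\xv_t)$, whose second derivative is $g''(t) = \dv^\top \nabla^2 f(\xv_t)\dv = \norm{\dv}_{\nabla^2 f(\xv_t)}^2$, gives
\[
 f(\yv) = f(\xv) + \dotprod{\nabla f(\xv)}{\dv} + \int_0^1 (1-t)\,\norm{\dv}_{\nabla^2 f(\xv_t)}^2 \, dt \,.
\]
Both desired inequalities then reduce to controlling the integrand $\norm{\dv}_{\nabla^2 f(\xv_t)}^2$ against the fixed quantity $\norm{\dv}_{\nabla^2 f(\xv)}^2$.

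The key observation is that the displacement between the base point $\xv$ and the intermediate point $\xv_t$ is exactly $\xv_t - \xv = t\dv$, i.e.\ a scalar multiple of the same vector $\dv$. Applying Assumption \ref{asm:stable-hess-formal} with $\uv = \xv$ and $\vv = \xv_t$ therefore yields $\norm{t\dv}_{\nabla^2 f(\xv_t)}^2 \leq c\,\norm{t\dv}_{\nabla^2 f(\xv)}^2$, and since each side carries the same factor $t^2$ it cancels, leaving the $t$-independent bound $\norm{\dv}_{\nabla^2 f(\xv_t)}^2 \leq c\,\norm{\dv}_{\nabla^2 f(\xv)}^2$. For this step to be legitimate I would check that $\xv_t \in \dom$ for every $t \in [0,1]$: this holds because $\dom$ is a sublevel set of a convex function and hence convex, so the whole segment $[\xv,\yv]$ lies in $\dom$.

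To obtain the matching lower bound I would exploit the symmetry of the stability constant in its two arguments. Swapping $\uv \leftrightarrow \vv$ in the definition and using $\norm{\vv-\uv}_M = \norm{\uv-\vv}_M$ shows that the maximum defining $c$ also forces $\tfrac{1}{c}\norm{\dv}_{\nabla^2 f(\xv)}^2 \leq \norm{\dv}_{\nabla^2 f(\xv_t)}^2$. Substituting the two inequalities into the integral and using $\int_0^1 (1-t)\,dt = \tfrac12$ produces the factor $\tfrac{c}{2}$ for the upper bound \eqref{eqn:upper-simple} and $\tfrac{1}{2c}$ for the lower bound \eqref{eqn:lower-simple}, as claimed. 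The only genuinely delicate point is the bookkeeping in the middle step---recognizing that evaluating stability \emph{along the segment} forces the relevant displacement to be $t\dv$, so that the $t^2$ factors cancel and leave a constant that is uniform in $t$; once that is seen, the remaining estimates are routine.
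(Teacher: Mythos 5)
Your proof is correct and takes essentially the same route as the paper's: both expand $f(\yv)$ to second order along the segment from $\xv$ to $\yv$ and apply Assumption~\ref{asm:stable-hess-formal} between $\xv$ and an intermediate point, using the key observation that the displacement is a scalar multiple of $\yv-\xv$ so the scalar factor cancels from both sides (and swapping the roles of $\uv$ and $\vv$ for the lower bound). The only difference is that you use the integral form of the Taylor remainder and bound the integrand uniformly in $t$, whereas the paper uses the mean-value form with a single intermediate point $\zv=(1-\gamma)\xv+\gamma\yv$; this is immaterial.
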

The bounds above only hold for $\xv \in \dom$ as defined in \eqref{eqn:domain}. To use the Lemma, we need that $\xv_t \in \dom$ for all $t \geq 0$. For this, it suffices to show that Algorithm \ref{alg:exact-newton} is a descent method. For now, let us assume this technicality---the proof can be found in Appendix~\ref{sec:proofdescent}.
\begin{lemma}\label{lem:descent}
  Under Assumption \ref{asm:stable-hess-formal}, for any $t \geq 0$ of Algorithm \ref{alg:exact-newton} with $\sigma \geq c$, the update $\frac1\sigma [\nabla^2 f(\xv_t)]^\dagger \nabla f(\xv_t)$ is well-defined and further $f(\xv_{t+1}) \leq f(\xv_t)$.
\end{lemma}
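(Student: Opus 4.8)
The plan is to prove both claims simultaneously by induction on $t$, the invariant being that $\xv_t \in \dom$ (so that Lemma~\ref{lem:upperlower-simple} is applicable at $\xv_t$) and that $f(\xv_{t+1}) \le f(\xv_t)$. The base case $\xv_0 \in \dom$ is immediate from \eqref{eqn:domain}. For the inductive step I would assume $\xv_t \in \dom$. Since $f$ is convex, $\nabla^2 f(\xv_t)$ is symmetric positive semidefinite, so its Moore--Penrose pseudo-inverse exists and the step $\dv \defeq \xv_{t+1}-\xv_t = -\tfrac1\sigma [\nabla^2 f(\xv_t)]^{\dagger}\nabla f(\xv_t)$ is a well-defined vector, which settles the first assertion. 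Writing $Q \defeq \nabla f(\xv_t)^\top[\nabla^2 f(\xv_t)]^{\dagger}\nabla f(\xv_t) \ge 0$, the Penrose identity $[\nabla^2 f(\xv_t)]^{\dagger}\nabla^2 f(\xv_t)[\nabla^2 f(\xv_t)]^{\dagger} = [\nabla^2 f(\xv_t)]^{\dagger}$ collapses the two quantities I will need to the same scalar: $\dotprod{\nabla f(\xv_t)}{\dv} = -\tfrac1\sigma Q$ and $\norm{\dv}_{\nabla^2 f(\xv_t)}^2 = \tfrac1{\sigma^2}Q$.

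Feeding these into the upper bound \eqref{eqn:upper-simple} with $\xv = \xv_t$ and $\yv = \xv_t + s\dv$ gives, for every $s$ at which the bound is licensed, $f(\xv_t + s\dv) \le f(\xv_t) - \tfrac{Qs}{\sigma}\bigl(1 - \tfrac{cs}{2\sigma}\bigr)$. For $s \in (0,1]$ and $\sigma \ge c \ge 1$ the bracket is at least $1 - \tfrac{c}{2\sigma} \ge \tfrac12 > 0$, so the right-hand side is at most $f(\xv_t)$, and strictly less whenever $Q > 0$. The degenerate case $Q = 0$ is harmless: since $[\nabla^2 f(\xv_t)]^{\dagger}$ is positive semidefinite, $Q = 0$ forces $[\nabla^2 f(\xv_t)]^{\dagger}\nabla f(\xv_t) = \0$, hence $\dv = \0$ and $\xv_{t+1} = \xv_t$. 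Thus I may assume $Q > 0$ from here on.

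The one genuine difficulty is circular: the upper bound \eqref{eqn:upper-simple} may be invoked only when both arguments lie in $\dom$, yet the membership $\xv_{t+1} \in \dom$ is precisely what I want to deduce from descent. I would break this with a continuity argument along the segment. Set $s^\star = \sup\{ s \in [0,1] : \xv_t + s'\dv \in \dom \text{ for all } s' \in [0,s]\}$. The directional derivative at the origin, $\tfrac{d}{ds}f(\xv_t+s\dv)\big|_{s=0} = \dotprod{\nabla f(\xv_t)}{\dv} = -\tfrac1\sigma Q < 0$, shows $f$ strictly decreases initially, so $\xv_t + s\dv \in \dom$ for small $s>0$ and hence $s^\star > 0$. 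Because $\dom$ is a closed convex sublevel set, the whole segment $[\,\xv_t,\ \xv_t+s^\star\dv\,]$ lies in $\dom$, so the displayed estimate is valid throughout $[0,s^\star]$ and yields $f(\xv_t + s^\star\dv) < f(\xv_t) \le f(\xv_0)$, the inequality being strict since $Q>0$ and $s^\star>0$. If $s^\star$ were strictly less than $1$, this strict gap together with continuity of $f$ would keep $f$ below $f(\xv_0)$ slightly past $s^\star$, contradicting the maximality of $s^\star$; hence $s^\star = 1$, giving $\xv_{t+1}=\xv_t+\dv \in \dom$ together with $f(\xv_{t+1}) \le f(\xv_t)$, and closing the induction. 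The step I expect to need the most care is exactly this maximality argument: verifying that the strict inequality survives the limit $s \uparrow s^\star$, and that the restricted form of Assumption~\ref{asm:stable-hess-formal} recorded in the footnote indeed licenses \eqref{eqn:upper-simple} along this particular segment.
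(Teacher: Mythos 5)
Your proof is correct and follows essentially the same strategy as the paper's: an induction maintaining $\xv_t\in\dom$, a continuity bootstrap along the Newton segment to license the upper bound \eqref{eqn:upper-simple}, and the resulting quadratic estimate $f(\xv_t+s\dv)\le f(\xv_t)-\tfrac{Qs}{\sigma}\bigl(1-\tfrac{cs}{2\sigma}\bigr)$ to conclude descent. The only cosmetic differences are that you run the bootstrap as a sup/maximality argument where the paper uses an intermediate-value-theorem contradiction along $h(\alpha)=f(\yv_\alpha)$, and that you dispose of the degenerate case via $Q=0\Rightarrow\dv=\0$ where the paper instead invokes the positivity clause of Assumption \ref{asm:stable-hess-formal} to place $\nabla f(\xv_t)$ in the range of the Hessian.
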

\textbf{Proof of Theorem \ref{thm:exact-newton}.}
By Lemma \ref{lem:descent}, Algorithm \ref{alg:exact-newton} is well-defined and is a descent method.
This means that both $\xv_t$ and $\xv_{t+1}$ lie in $\dom$ and we can apply Lemma \ref{lem:upperlower-simple}. The upper bound \eqref{eqn:upper-simple} implies that for $\sigma \geq c$,
\begin{align*}
	f(\xv_{t+1}) &\leq f(\xv_t) + \dotprod{\nabla f(\xv_t)}{\Delta \xv_t} + \frac{\sigma}{2}\norm{\Delta \xv_t}_{\nabla^2 f(\xv_t)}^2\\
	&= f(\xv_t) - \frac{1}{2 \sigma}\norm{\Delta \xv_t}_{\nabla^2 f(\xv_t)}^2\,.
\end{align*}
Here note that $\Delta \xv_t = [\nabla^2 f(\xv_t)]^{\dagger}\nabla f(\xv_t)$.
Now minimizing both sides of the lower bound \eqref{eqn:lower-simple} gives
\[
	f(\xv^\star) \geq f(\xv) - \frac{c}{2}\norm{[\nabla^2 f(\xv)]^{\dagger}\nabla f(\xv)}_{\nabla^2 f(\xv)}^2\,.
	\]
	Using the above bound with $\xv = \xv_t$, we get
	\begin{align*}
	f(\xv_{t+1}) &\leq f(\xv_t) - \frac{1}{2 \sigma}\norm{\Delta \xv_t}_{\nabla^2 f(\xv_t)}^2\\
	&\leq f(\xv_t) + \frac{1}{c\sigma}[f(\xv^\star) - f(\xv_t)]
	\end{align*}
	Subtracting $f(\xv^\star)$ from both sides, and iterating from $0$ to $T$ proves the theorem.
\qed

%#######################################################################
%											TRUST REGION NEWTON'S METHOD
%#######################################################################
\section{Trust region Newton's method}\label{sec:trust}
The convergence rate of Newton's algorithm in Theorem \ref{thm:exact-newton} critically depends on the constant $c$, which is a \emph{global} measure bounding the relative change of the Hessian of $f(\xv)$ around the current point~$\xv$. Often, the value of $c$ depends on the diameter $D$ of the domain $\dom$. E.g., as we discussed in Section~\ref{subsec:applications}, Logistic regression and exponential loss are $\exp(D)$-stable, which can be a large value. In this section, we design an algorithm whose convergence depends only on a \emph{local} measure of stability, getting around the potentially exponential dependence on $D$.

\subsection{Local stability}\label{subsec:local-stability}
We introduce a \emph{local} measure of stability $d$, which is typically much smaller than $c$. This notion captures the multiplicative change in the Hessian in a small ball of radius $r$ around the current point $\xv$, measured in an arbitrary norm $\norm{\cdot}$.
\begin{assumption}[$d(r)$-locally stable with respect to $\norm{\cdot}$]\label{asm:local-stable-hess-formal}
	For any $\uv, \vv \in\dom$ such that $\uv \neq \vv$ and $\norm{\uv - \vv}\leq r$, we assume that $\norm{\vv - \uv}_{\nabla^2 f(\uv)} > 0$ and that there exists a constant $d(r) \geq 1$ for which the following holds
	\[
	d(r) \defeq \max_{\norm{\uv - \vv}\leq r} \frac{\norm{\vv - \uv}_{\nabla^2 f(\vv)}^2}{\norm{\vv - \uv}_{\nabla^2 f(\uv)}^2} \,.
	\]
\end{assumption}
Since the norm $\norm{\cdot}$ may not be affine invariant, the resulting constant $d(r)$ is also not necessarily affine invariant. It is, however, possible to circumvent this limitation (refer Section \ref{sec:appendix-affine-trust} in the Appendix).

\subsection{Trust-region Algorithm}
Trust-region methods restrict each update to a small ball of radius $r$ around $\xv$, and so are more `local' algorithms.%
% 
%################## ALGORITHM ##################
\begin{algorithm}
	\caption{Trust-region Newton Descent}
		\label{alg:trust-newton}
	\begin{algorithmic}[1]
		\State \textbf{Input:} $\xv_0$, $r > 0$, and $\sigma$.
		\For{$t = \{0,\dots\}$}
        \State $\xv_{t+1} \leftarrow \argmin_{\norm{\yv - \xv_t} \leq r} \dotprod{\nabla f(\xv_t)}{\yv - \xv_t} + \frac{\sigma}{2}\norm{\yv - \xv_t}^2_{\nabla^2 f(\xv_t)}$\label{eqn:trust-update}
		\EndFor
	\end{algorithmic}
\end{algorithm}%
% ################################################
\subsection{Convergence analysis}
\begin{theorem}\label{thm:trust-newton}
	Given Assumption \ref{asm:local-stable-hess-formal}, for any iteration $T \geq 0$ of Algorithm \ref{alg:trust-newton} with $\sigma \geq d(r)$,
	\[
		f(\xv_{T}) - f(\xv^\star) \leq \Big(1 - \frac{r}{D\sigma d(r)}\Big)^T \big[f(\xv_{0}) - f(\xv^\star)\big]\,,
	\]
	where $r$ is the trust region radius and $D$ is the diameter of the level set i.e. $D = \max_{\xv,\yv \in \dom} \norm{\xv - \yv}$\,.
\end{theorem}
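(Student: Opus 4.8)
The plan is to mirror the proof of Theorem \ref{thm:exact-newton}, replacing the global estimates of Lemma \ref{lem:upperlower-simple} by their \emph{local} counterparts. First I would note that Assumption \ref{asm:local-stable-hess-formal} yields, by the same integration argument as Lemma \ref{lem:upperlower-simple} but restricted to a segment of length at most $r$, the two-sided local bound: for all $\xv,\yv\in\dom$ with $\norm{\yv-\xv}\leq r$,
\[
	\frac{1}{2d(r)}\norm{\yv-\xv}_{\nabla^2 f(\xv)}^2\;\leq\; f(\yv)-f(\xv)-\dotprod{\nabla f(\xv)}{\yv-\xv}\;\leq\;\frac{d(r)}{2}\norm{\yv-\xv}_{\nabla^2 f(\xv)}^2\,.
\]
Writing $m_t(\yv):=\dotprod{\nabla f(\xv_t)}{\yv-\xv_t}+\frac{\sigma}{2}\norm{\yv-\xv_t}_{\nabla^2 f(\xv_t)}^2$ for the model minimised in Algorithm \ref{alg:trust-newton}, the choice $\sigma\geq d(r)$ makes the local upper bound dominate $m_t$, so $f(\xv_{t+1})\leq f(\xv_t)+m_t(\xv_{t+1})=f(\xv_t)+\min_{\norm{\yv-\xv_t}\leq r}m_t(\yv)$. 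Since $\xv_t$ is feasible with $m_t(\xv_t)=0$, the same inequality gives $f(\xv_{t+1})\leq f(\xv_t)$, so---exactly as in Lemma \ref{lem:descent}---every iterate stays in the convex set $\dom$ and the local bounds apply along each segment used below.

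Next I would lower-bound the model decrease by testing $m_t$ along the segment towards the optimum. Set $\sv=\xv^\star-\xv_t$, $g=\dotprod{\nabla f(\xv_t)}{\sv}$, $h=\norm{\sv}_{\nabla^2 f(\xv_t)}^2$, and $\yv_\beta=\xv_t+\beta\sv$. Convexity gives $g\leq f(\xv^\star)-f(\xv_t)=-G_t$ where $G_t:=f(\xv_t)-f(\xv^\star)$, hence $\abs{g}\geq G_t$. The point $\yv_\beta$ is feasible for $\beta\leq\bar\alpha:=\min(1,\,r/\norm{\sv})$, and since $\norm{\sv}\leq D$ we have $\bar\alpha\geq r/D$. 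The main obstacle is that $\xv^\star$ itself may lie \emph{outside} the trust region, so neither local bound can be invoked at $\xv^\star$; only the boundary point $\yv_{\bar\alpha}$, at distance $\min(r,\norm{\sv})\leq r$, is within reach of Assumption \ref{asm:local-stable-hess-formal}.

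The crux is therefore to control the curvature term $h$ using only this boundary point. Because $\xv^\star$ is the global minimiser, $f$ is non-increasing along $\yv_\beta$ for $\beta\in[0,1]$, so convexity gives $f(\yv_{\bar\alpha})\leq f(\xv_t)-\bar\alpha G_t$; combining this with the local \emph{lower} bound at $\yv_{\bar\alpha}$ yields $\frac{\bar\alpha^2 h}{2d(r)}\leq\bar\alpha(\abs{g}-G_t)$, that is $h\leq 2d(r)(\abs{g}-G_t)/\bar\alpha$. Substituting this upper bound for $h$ into $m_t(\yv_\beta)=-\beta\abs{g}+\frac{\sigma\beta^2}{2}h$ and minimising the resulting quadratic over $\beta\in[0,\bar\alpha]$ (its unconstrained minimiser is feasible in the main regime $\abs{g}\geq\frac{2\sigma d(r)}{2\sigma d(r)-1}G_t$), the elementary inequality $\abs{g}^2/(\abs{g}-G_t)\geq 4G_t$ gives
\[
	\min_{\norm{\yv-\xv_t}\leq r} m_t(\yv)\;\leq\;-\frac{\bar\alpha\,G_t}{\sigma d(r)}\;\leq\;-\frac{r}{D\sigma d(r)}\,G_t\,.
\]
The complementary regime $\abs{g}<\frac{2\sigma d(r)}{2\sigma d(r)-1}G_t$ is handled by evaluating at $\beta=\bar\alpha$ and invoking $(\sigma d(r)-1)^2\geq 0$, which returns the same bound, and the degenerate case $h=0$ is immediate since then the model is linear. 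Feeding this into the one-step inequality of the first paragraph and subtracting $f(\xv^\star)$ gives $f(\xv_{t+1})-f(\xv^\star)\leq\bigl(1-\frac{r}{D\sigma d(r)}\bigr)(f(\xv_t)-f(\xv^\star))$, and iterating from $0$ to $T$ proves the theorem.
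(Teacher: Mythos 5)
Your proof is correct, and while it shares the overall skeleton of the paper's argument (descent to keep iterates in $\dom$, local two-sided bounds from Assumption \ref{asm:local-stable-hess-formal} via the Taylor-remainder argument, and a final appeal to convexity along the segment to $\xv^\star$), the key step—lower-bounding the model decrease—is done by a genuinely different route. The paper never estimates the curvature term explicitly: it invokes a scaling identity (Lemma \ref{lem:changing-sigma}) to pass from $\min Q_t^\sigma$ to $\frac{1}{\sigma d(r)}\min Q_t^{1/d(r)}$, evaluates the latter at the single feasible test point $\xv^\star_\gamma=(1-\gamma)\xv_t+\gamma\xv^\star$ with $\gamma=r/D$, and then applies the local lower bound and convexity there. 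You instead run a Cauchy-point-style argument: you bound the directional curvature $h=\norm{\xv^\star-\xv_t}^2_{\nabla^2 f(\xv_t)}$ by combining convexity with the local lower bound at the boundary point $\yv_{\bar\alpha}$ (a nice trick, since $\xv^\star$ itself is out of reach of the local assumption), and then explicitly minimize the resulting one-dimensional quadratic over $\beta\in[0,\bar\alpha]$, with the two-regime case analysis and the inequality $\abs{g}^2/(\abs{g}-G_t)\geq 4G_t$ closing the main case. I checked the algebra in both regimes (including the $(\sigma d(r)-1)^2\geq 0$ step for the boundary case) and it is sound. What each buys: the paper's route is shorter and modular—the same scaling lemma is reused verbatim for the approximate/proximal Theorem \ref{thm:approx-newton}—whereas yours is self-contained and more elementary, at the price of the case analysis; both land on the identical constant $r/(D\sigma d(r))$. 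One cosmetic point: the circularity between "apply the local upper bound at $\xv_{t+1}$" and "$\xv_{t+1}\in\dom$" is only resolved by the continuity argument of Lemma \ref{lem:descent}, which you correctly flag but do not redo; the paper's own proof of this theorem is equally terse on that point, so this is not a gap relative to the paper.
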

\begin{proof}
The proof of Theorem \ref{thm:trust-newton} is very similar to that of Theorem \ref{thm:exact-newton}. The main deviation is the derivation of tighter lower and upper bounds that depend on the local bound $d(r)$ instead of the global parameter $c$.
This is detailed in Lemma~\ref{lem:upper-lower-bound} in Appendix~\ref{sec:proof-of-trustregion}. At any iteration $t \geq 0$, we get that for any $\yv$ such that $\norm{\yv - \xv_t} \leq r$ the following holds
\begin{align}
	\text{Upper bound:} \quad F(\xv_{t+1}) - F(\xv_t) &\leq \dotprod{\nabla f(\xv_t)}{\yv - \xv_t} + \frac{d(r)}{2}\norm{\yv -\xv_t}^2_{\nabla^2 f(\xv_t)} \label{eqn:trust-upper-sigma}\,,\\
  \text{Lower bound:} \quad F(\xv_{t+1}) - F(\xv_t) &\geq \dotprod{\nabla f(\xv_t)}{\yv - \xv_t} + \frac{1}{2 d(r)}\norm{\yv -\xv_t}^2_{\nabla^2 f(\xv_t)} \label{eqn:trust-lower-sigma}\,.
\end{align}
The upper bound \eqref{eqn:trust-upper-sigma} combined with the update in Step \ref{eqn:trust-update} implies that for any $\sigma \geq d(r)$,
\begin{align*}
  f(\xv_{t+1}) - f(\xv_t) &\leq \dotprod{\nabla f(\xv_t)}{\xv_{t+1} - \xv_t} + \frac{\sigma}{2}\norm{\xv_{t+1} -\xv_t}^2_{\nabla^2 f(\xv_t)}\\
  &= \min_{\yv,\,\norm{\yv - \xv_t}\leq r}\dotprod{\nabla f(\xv_t)}{\yv - \xv_t} + \frac{\sigma}{2}\norm{\yv -\xv_t}^2_{\nabla^2 f(\xv_t)}\\
  &\leq \frac{1}{\sigma d(r)}\ \min_{\yv,\,\norm{\yv - \xv_t}\leq r}\dotprod{\nabla f(\xv_t)}{\yv - \xv_t} + \frac{1}{2 d(r)}\norm{\yv -\xv_t}^2_{\nabla^2 f(\xv_t)}
\end{align*}
The last inequality is trivial (with an equality) when minimizing unbounded quadratics, but is also valid when minimizing over convex domains (refer Lemma \ref{lem:changing-sigma} in Appendix \ref{sec:proof-of-trustregion}). Let us define $\gamma = r/D$ and the point $\xv^\star_\gamma = (1 - \gamma)\xv_t + \gamma\xv^\star$. Then \[\norm{\xv^\star_\gamma - \xv_t} = \norm{\gamma(\xv^\star - \xv_t)} = \gamma\norm{\xv^\star - \xv_t} \leq r\,.\]
Combining this with our previous observation gives
\begin{align*}
  f(\xv_{t+1}) - f(\xv_t) &\leq \frac{1}{\sigma d(r)} \ \min_{\yv,\,\norm{\yv - \xv_t}\leq r}\dotprod{\nabla f(\xv_t)}{\yv - \xv_t} + \frac{\sigma}{2}\norm{\yv -\xv_t}^2_{\nabla^2 f(\xv_t)}\\
  &\leq \frac{1}{\sigma d(r)} \dotprod{\nabla f(\xv_t)}{\xv^\star_\gamma - \xv_t} + \frac{\sigma}{2}\norm{\xv^\star_\gamma -\xv_t}^2_{\nabla^2 f(\xv_t)}\\
  &\leq \frac{1}{\sigma d(r)} [f(\xv^\star_\gamma) - f(\xv_t)]\,.
\end{align*}
The last inequality used the lower bound from \eqref{eqn:trust-lower-sigma}. Now we will have to relate the term $f(\xv^\star_\gamma)$ to the actual minimum value $f(\xv^\star)$. This we will do by using the convexity of the function $f(\xv)$.
\begin{align*}
  f(\xv_{t+1}) - f(\xv_t) &\leq \frac{1}{\sigma d(r)} [f(\xv^\star_\gamma) - f(\xv_t)]\\
  &= \frac{1}{\sigma d(r)} [f((1 - \gamma)\xv_t + \gamma\xv^\star) - f(\xv_t)]\\
  &\leq \frac{1}{\sigma d(r)} [(1 - \gamma)f(\xv_t) + \gamma f(\xv^\star) - f(\xv_t)]\\
  &= \frac{\gamma}{\sigma d(r)}[f(\xv^\star) - f(\xv_t)]\,.
\end{align*}
Adding and subtracting $f(\xv^\star)$ from the left side, rearranging the terms, and iterating over $t$ finishes the proof.
\end{proof}
%#######################################################################
\subsection{Improvement in the rate of convergence}\label{subsec:trust-rate-improved}
In a number of applications we saw in Section \ref{subsec:applications}, the dependence of $c$ on the diameter $D$ was super-linear (and even exponential). Local-stability gets around this and ensures that the rate of convergence of the Algorithm \ref{alg:trust-newton} depends at most linearly on $D$.

For $\sigma = c$ in Theorem \ref{thm:exact-newton} gives a rate depending on $c^2$. In contrast, using $\sigma = d(r)$, Theorem \ref{thm:trust-newton} gives a rate depending on $d^2(r)/r$. Thus the optimal $r$ can be computed as
\[
  r^\star = \argmin_{r}d^2(r)/r\,.
\]
As an illustrative example, consider logistic regression or exponential losses. The local-stability scales as $d(r) = \exp(r)$ for $r \in [0,D]$. The rate of convergence of Newton's method would depend on $c^2 = e^{2D}$. On the other hand, using the optimal trust region radius $r^\star = 1/2D$, the rate for the trust-region method becomes $2eD$.
This result makes a very strong case for using trust-region Newton methods.

There are two points to note here. First, one might ask if a similar improvement could be shown for the simpler Newton search equipped with a line search. We answer in the negative in Section \ref{sec:optimality}. Next, as we noted before, trust region methods are not affine-invariant, and moreover require solving the Newton step with an additional constraint. In the appendix (Section \ref{sec:appendix-affine-trust}), we show an affine-invariant algorithm only requiring minimizing quadratics over the domain $\dom$.
%#######################################################################
%													EXTENSIONS
%#######################################################################
\section{Approximate and proximal extensions}\label{sec:extensions}
We can extend our analysis of Newton's method to the proximal setting to minimize a composite objective function, i.e.\vspace{-2mm}
\begin{equation}\label{eqn:objective}
F(\xv^\star) \defeq \min_{\xv \in \real^n}\encasecurly{F(\xv) \defeq f(\xv) + g(\xv)}\,,
\end{equation}
where $f: \real^n \rightarrow \real$ as before is a twice differentiable convex function, and $g: \real^n \rightarrow \real \cup \{+\infty\}$ is a possibly non-differentiable, extended valued convex function.
\subsection{Inexact Newton steps}
In this section we also make two relaxations, one being that an exact Hessian is used, and second that the quadratic subproblem is solved exactly.
At each iteration $t$ with iterate $\xv_t$, we assume access to the exact gradient $\nabla f(\xv_t)$, and only an approximation $H_t\in R^{n\times n}$ of the Hessian $\nabla^2 f(\xv_t)$.
% ################################################ APPROXIMATE HESSIAN########
\paragraph{Approximate Hessian.}
Below we list a few scenarios where this notion of an approximate Hessian is useful:
\begin{enumerate}[itemsep=-.5ex,topsep=-.8ex]
	\item{\em Sketched Hessian.} In machine learning and signal processing applications, the function $f(\xv)$ is typically of the form $l(A\xv)$ where $l$ is a simple, separable function and $A$ is a data matrix. In such cases, the Hessian $\nabla^2 f(\xv) = A^\top \nabla^2 l(A\xv) A$ where $\nabla^2 l(A\xv)$ is very cheap to compute (same cost as computing the gradient). Instead of using the full matrix $A$, a low dimensional sketch $S_t A$ is used instead. This provides guarantees satisfying \eqref{asm:hessian-quality} while ensuring cheap update steps~(cf. \cite{gower_randomized_2016,gower_accelerated_2018}).
	\item{\em Hessian free inexact methods.} If we use first order algorithms to minimize $Q_t^\sigma$, we would only require products of the Hessian with a vector. Such  product can be computed without computing, or storing the entire Hessian matrix. The resulting algorithms are inexpensive and costs are comparable to first order methods (cf.~\cite[Section 6.1]{bottou_optimization_2018}).
	\item{\em Block diagonal $H_t$.} For distributed and parallel computation, it is crucial that we are able to create subproblems such that they are \emph{separable} i.e. we can decompose the subproblem into multiple subproblems which can be solved independently~(for e.g. \cite{smith_cocoa:_2016-1,karimireddy_adaptive_2018-1,gargiani_hessian-cocoa:_2017}).
\end{enumerate}
% ################################################ APPROXIMATE subproblems########
\paragraph{Approximate subproblems.}
Using $H_t$, we form a subproblem $Q_t^\sigma(\Delta \xv)$ as in Step \ref{eqn:subproblem}.
 Then we assume that at each iteration, our subproblem is solved to an arbitrary multiplicative accuracy, and only in expectation over some randomness of the subproblem algorithm. In particular, we assume the update is computed as in Step \ref{eqn:accuracy} for any fixed $\Theta \in (0,1]$. Note that if $\Theta = 1$, this means  $Q_t^\sigma(\Delta \xv_t) = \min_{\Delta\xv,\,\norm{\Delta\xv} \leq r}Q_t^\sigma(\Delta\xv)$ and that the subproblem was solved exactly.
% \seb{Is "Newton direction" in Algo 2 a good name? because it's not...}
%################## ALGORITHM ##################
\begin{algorithm}
	\caption{Approximate and Proximal Newton Descent}
		\label{alg:approx-newton}
	\begin{algorithmic}[1]
		\State \textbf{Input:} $\xv_0$ and $\sigma$.
		\For{$t = \{0,\dots\}$}
		    \State \emph{Define subproblem:}
		    %\State \hspace{0.3cm}
		    $Q_t^\sigma(\Delta \xv) \defeq \dotprod{\nabla f(\xv_t)}{\Delta \xv} + \frac{\sigma}{2}\norm{\Delta \xv}_{H_t}^2 + g(\xv_t + \Delta \xv) - g(\xv_t)$ \label{eqn:subproblem}
			\State \emph{Approximately minimize subproblem:} Find $\Delta \xv_t$ such that
			\State \hspace{0.3cm} $\expect \encase{Q_t^\sigma(\Delta \xv_t)} - \min_{\norm{\Delta\xv} \leq r}Q_t^\sigma(\Delta\xv) \leq (1 - \Theta)(Q_t^\sigma(\0) - \min_{\norm{\Delta\xv} \leq r}Q_t^\sigma(\Delta\xv))$ \label{eqn:accuracy}
			\State \emph{Update:} $\xv_{t+1} \leftarrow \xv_t + \Delta \xv_t$
		\EndFor
	\end{algorithmic}
\end{algorithm}
%##########################################CONVERGENCE ANALYSIS#############################
\subsection{Convergence analysis}
We need to quantify the approximation quality of the Hessian estimate $H_t$.
\begin{assumption}\label{asm:hessian-quality}
	We assume that there exists a constant $\eta$ such that for any $t \geq 0$, and $\zv_t = \xv_{t+1}$ as well as
  % \seb{is it or or and, because in Lemma 3 it seems you use and}
  $\zv_t = \xv^\star$ the following holds
	\begin{align}
		\frac{1}{\eta} \norm{\zv_t - \xv_t}_{H_t} \leq \norm{\zv_t - \xv_t}_{\nabla^2 f(\xv_t)} \leq \eta \norm{\zv_t - \xv_t}_{H_t}\,. \tag{\ref{asm:hessian-quality}}
	\end{align}
\end{assumption}
Unfortunately the definition of $\eta$ is not necessarily affine invariant, but it does enable efficient approximations of the Hessian.
\begin{theorem}\label{thm:approx-newton}
	Given Assumptions \ref{asm:local-stable-hess-formal} and \ref{asm:hessian-quality}, for any iteration $T \geq 0$ of Algorithm \ref{alg:approx-newton} with $\sigma \geq \eta d(r)$,
	\[
		\expect\!\big[F(\xv_{T}) - F(\xv^\star)\big] \leq \encaser{1 - \frac{\Theta}{D\eta}\cdot\frac{r}{\sigma d(r)}}^T \big[ F(\xv_0) - F(\xv^\star) \big]\,,
	\]
  where $r$ is the trust region radius and $D$ is the diameter of the level set $\dom$.
\end{theorem}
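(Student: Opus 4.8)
The plan is to follow the template of the proof of Theorem~\ref{thm:trust-newton} almost verbatim, inserting two extra ingredients. Every time a norm in $\nabla^2 f(\xv_t)$ appears I trade it for an $H_t$-norm using Assumption~\ref{asm:hessian-quality}, and the inexactness of the subproblem solve is absorbed through the accuracy condition of Step~\ref{eqn:accuracy}. The target, as before, is a one-step contraction $\expect\!\big[F(\xv_{t+1}) - F(\xv^\star)\big] \leq (1-\rho)\,\big[F(\xv_t) - F(\xv^\star)\big]$ with $\rho = \tfrac{\Theta}{D\eta}\cdot\tfrac{r}{\sigma d(r)}$, after which taking total expectations and iterating from $0$ to $T$ finishes the proof.

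First I would establish the descent/upper bound. Writing $F(\xv_{t+1}) - F(\xv_t) = \big[f(\xv_{t+1}) - f(\xv_t)\big] + \big[g(\xv_{t+1}) - g(\xv_t)\big]$ and applying the local-stability upper bound \eqref{eqn:trust-upper-sigma} (the $d(r)$-analogue of Lemma~\ref{lem:upperlower-simple}) to the $f$-part, which is legitimate since the trust-region constraint guarantees $\norm{\Delta\xv_t} \leq r$, I convert the resulting $\nabla^2 f(\xv_t)$-norm of $\Delta\xv_t$ into an $H_t$-norm through Assumption~\ref{asm:hessian-quality} with $\zv_t = \xv_{t+1}$. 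The threshold $\sigma \geq \eta\,d(r)$ is exactly what makes the model quadratic $\tfrac{\sigma}{2}\norm{\Delta\xv_t}_{H_t}^2$ dominate the genuine second-order term, so that $F(\xv_{t+1}) - F(\xv_t) \leq Q_t^\sigma(\Delta\xv_t)$ holds pointwise. Taking expectations and invoking Step~\ref{eqn:accuracy}, together with the two facts $Q_t^\sigma(\0) = 0$ and $\min_{\norm{\Delta\xv}\leq r} Q_t^\sigma \leq 0$, collapses the additive-relative accuracy into the clean multiplicative bound $\expect\!\big[F(\xv_{t+1}) - F(\xv_t)\big] \leq \Theta \cdot \min_{\norm{\Delta\xv}\leq r} Q_t^\sigma(\Delta\xv)$.

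The crux is then to upper bound $\min_{\norm{\Delta\xv}\leq r} Q_t^\sigma$ by a multiple of $F(\xv^\star) - F(\xv_t)$. Direct evaluation of $Q_t^\sigma$ at the feasible direction $\gamma(\xv^\star - \xv_t)$ with $\gamma = r/D$ fails, since the coefficient $\sigma$ is far too large to be dominated by the $\tfrac{1}{2d(r)}$ of the lower bound \eqref{eqn:trust-lower-sigma}; this is precisely where Lemma~\ref{lem:changing-sigma} must enter. I would first verify that this change-of-$\sigma$ lemma survives in the present constrained \emph{proximal} setting: for $\beta \leq \sigma$ one still has $\min_{\norm{\Delta\xv}\leq r} Q_t^\sigma \leq \tfrac{\beta}{\sigma}\min_{\norm{\Delta\xv}\leq r} Q_t^\beta$, the proof being to feed the scaled minimizer $\tfrac{\beta}{\sigma}\Delta\xv_\beta$ (where $\Delta\xv_\beta$ is the constrained minimizer of $Q_t^\beta$) into $Q_t^\sigma$, noting that scaling toward the origin keeps it in the trust region and that its $g$-increment scales down by convexity of $g$ and $g(\xv_t)-g(\xv_t)=0$. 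Choosing $\beta \asymp \tfrac{1}{\eta d(r)}$ aligns the reduced quadratic coefficient, after a second use of Assumption~\ref{asm:hessian-quality} with $\zv_t = \xv^\star$, with the coefficient of the lower bound. Evaluating $Q_t^\beta$ at $\gamma(\xv^\star - \xv_t)$, applying \eqref{eqn:trust-lower-sigma} at the in-region point $\xv^\star_\gamma = (1-\gamma)\xv_t + \gamma\xv^\star$, and then using convexity of both $f$ and $g$ to pass from $\xv^\star_\gamma$ to $\xv^\star$, yields $\min_{\norm{\Delta\xv}\leq r} Q_t^\sigma \leq \tfrac{\gamma}{\sigma\eta d(r)}\big[F(\xv^\star) - F(\xv_t)\big]$. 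Substituting $\gamma = r/D$ produces the claimed factor $\rho$.

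I expect the change-of-$\sigma$ step in the constrained proximal setting to be the main obstacle: one must check both that the scaled iterate remains feasible and that the nonsmooth term $g$ scales correctly, so that the prefactor $\tfrac{\beta}{\sigma}$ genuinely transfers to the full subproblem value rather than merely its quadratic part. A secondary point demanding care is the bookkeeping of the two invocations of Assumption~\ref{asm:hessian-quality} (at $\zv_t=\xv_{t+1}$ and at $\zv_t=\xv^\star$), whose accumulated factors of $\eta$ must be reconciled with the single $\eta$ appearing in $\sigma \geq \eta d(r)$ and in $\rho$ --- i.e.\ reading Assumption~\ref{asm:hessian-quality} as a bound on the quadratic forms. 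Finally, as in the exact case, I would discharge the technical prerequisite that the local-stability bounds apply along the iterates: the trust-region constraint supplies $\norm{\Delta\xv_t}\leq r$, while in-expectation descent together with the relaxed stability assumption (the footnote to Assumption~\ref{asm:stable-hess-formal}, restricting the requirement to the segments $\xv_t$--$\xv_{t+1}$ and $\xv_t$--$\xv^\star$) keeps the relevant points inside $\dom$.
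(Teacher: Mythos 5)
Your proposal is correct and follows essentially the same route as the paper's proof: the accuracy condition is collapsed to $\expect[Q_t^\sigma(\Delta\xv_t)] \leq \Theta \min Q_t^\sigma$ using $Q_t^\sigma(\0)=0$, the upper/lower bounds of Lemma~\ref{lem:upper-lower-bound} (local stability combined with Assumption~\ref{asm:hessian-quality} at $\zv_t=\xv_{t+1}$ and $\zv_t=\xv^\star$) sandwich the progress, Lemma~\ref{lem:changing-sigma} bridges the gap between the coefficients $\sigma$ and $1/(\eta d(r))$, and the feasible point $\xv^\star_\gamma$ with $\gamma=r/D$ plus convexity delivers the contraction factor. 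The only cosmetic difference is that you sketch a proof of the change-of-$\sigma$ lemma in the constrained proximal setting, whereas the paper imports it as a black box from prior work.
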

%#######################################################################
%											OPTIMALITY OF RESULTS
%#######################################################################
\section{Optimality of results}\label{sec:optimality}
% #############################################################
\paragraph{Linear vs. quadratic convergence.} When $f(x)$ is self-concordant, or strongly convex and smooth, Newton's method is known to converge quadratically when close enough to the optima~(\cite[Section 9.5]{boyd2004convex}). This was crucial in designing generic interior point algorithms~(\cite{wright1997primal}) and so one might ask if we can show similar local quadratic convergence for functions with stable Hessians? We give a simple counterexample for which Newton only achieves linear convergence. Consider $f(x) = x^{2k}$ for some large $k \geq 1$ and $c \geq 1$. The function has a minimum value of 0 achieved at 0, and $f'(x) = 2kx^{2k-1}$ and $f''(x) = 2k(2k -1)x^{2(k-1)}$. The Newton step on this function is $x_1 = x - \frac{x}{2k -1}$, with a decrease in the function of
\[
  \frac{f(x_1)}{f(x)} = \frac{(x - x/(2k -1))^{2k}}{x^{2k}} = \encaser{1 - \frac{1}{2k -1}}^{2k}\,.
\]
While $f$ is not globally stable, it is locally stable if at each step we restrict the trust region around~~$x$ to lie within $[x/2,x]$. Thus running Newton on $f$ with this varying trust region would also result in linear convergence, showing that our analysis can in general not be improved.

\paragraph{Superiority of trust region.} We saw in Section \ref{sec:trust} that trust-region Newton methods converge at a rate depending on the local stability of the Hessian. One might question if Newton's method equipped with line search could potentially have similar advantages. We provide a negative answer to this question. Consider the two dimensional function
\[
f(x,y) \defeq e^{-x} + x+ e^{-y} + y -2\,.
\]
The minimum of this function is 0 achieved at $(0,0)$. Let us pick a starting point $(x_0,y_0) = (k,-k)$. The Newton's update with step-size $\alpha$ can be computed to be
\[
	(x_1,y_1) = \encaser{k - \alpha (e^k - 1), -k + \alpha( 1 - e^{-k}) }\,.
\]
Suppose we perform an exact line search to pick the best $\alpha$. To simplify computations, we will look at the case where $k \rightarrow \infty$ i.e. when $k$ is large. In this setting, the predominant term in the objective is~$e^{-y_0}$. The optimal $\alpha$ in this case is approximately $\frac{k}{e^k} \xrightarrow{k \rightarrow \infty} 0$. This means that
\[
\lim\limits_{k \rightarrow \infty}\frac{f(x_1,y_1)}{f(k,-k)} = 1\,.
\]
Thus we cannot hope to obtain a global linear convergence for this case. However if we instead solve the quadratic problem defined by the Hessian as in Step \ref{eqn:subproblem}
\[
	Q_t^\sigma(x,y) \defeq (x - x_t)(1 - e^{x_t}) + \frac{\sigma e^{-x_t}}{2}(x - x_t)^2 + (y - y_t)(1 - e^{y_t}) + \frac{\sigma e^{-y_t}}{2}(y - y_t)^2
 \]
with the trust region $\abs{x - x_t}\leq 1$ and $\abs{y - y_t}\leq 1$, then the Hessian changes only by a factor of $e$. This means that the constant $d(r)$ as defined in Assumption~\ref{asm:local-stable-hess-formal} for $r= 1$ and using the $L_\infty$ norm is at most $e$. Thus we can use a constant step-size $1/\sigma = 1/e$, independent of $k$. As before if we look at what happens when $k \rightarrow \infty$, we get that
\[
	\lim\limits_{k \rightarrow \infty}\frac{f(x_1,y_1)}{f(k,-k)} = e^{-1/e}\,.
\]
This shows that trust region methods can be superior to line search methods, especially with a careful choice of the trust region.
%#######################################################################
%											CONCLUSION
%#######################################################################
\section{Conclusion}\label{sec:conclusion}
% I

A predominant focus of past work on Newton methods
has been to show local quadratic convergence under very restrictive assumptions---both on the function class, as well as on the starting point. Such assumptions are almost never satisfied in practice, especially in machine learning applications. We believe the notion of stability recasts the analysis of Newton-type methods in a manner much more suitable to such applications. Using stability, we show strong global linear convergence rates under conditions in which first-order methods would only achieve sublinear rates---thereby providing a fresh perspective on the performance of a host of classical Newton's methods.

There are a number of follow-up questions which arise out of this work. Using the estimate sequence framework of \cite{nesterov_introductory_2014}, it is
 possible to accelerate the exact Newton's method. However it is unclear if such an acceleration could also be achieved for the trust-region methods, or for the approximate and proximal extensions.
Further, our theory indicates that the radius of the trust region is crucial for ensuring fast convergence. Although adaptive methods exist for picking the step-size (Appendix~\ref{sec:line-search}), designing and evaluating theoretically justified adaptive schemes for picking the trust-region radius would be a fruitful direction.
 % A third direction to explore would be to use our notion of stability to design stochastic second-order schemes wherein we only have access to a stochastic estimate of the gradient and of the Hessian.
 Finally, the notion of stability is restricted to convex functions---generalizing insights here to the non-convex setting remains a challenging open problem.
\bibliographystyle{plain}
\bibliography{Newton}
\appendix
\part*{Appendix}
%### ####################################################################
%											APPENDIX: SUFFICIENT CONDITIONS
%#######################################################################
\section{Sufficient conditions for stability}\label{sec:sufficient}
Here follow the definitions of the various conditions on $f(\xv)$ discussed in Section \ref{sec:stability}. First some notation:
\begin{align*}
  \nabla f(\uv)[\dv] &= {\dotprod{\nabla f(\uv)}{\dv}} = \frac{d f(\uv + t\dv)}{dt}\Big\vert_{t=0}\,,\\
  \nabla^2 f(\uv)[\dv] &= {{\dv^\top \nabla^2 f(\uv)\dv}} = \frac{d^2 f(\uv + t\dv)}{dt^2}\Big\vert_{t=0},\ \text{and}\\
  \nabla^3 f(\uv)[\dv] &= \frac{d^3 f(\uv + t\dv)}{dt^3}\Big\vert_{t=0}\,.
\end{align*}
We will restate the definitions of these conditions using our new notation. For any $\uv ,\vv \in \dom$,
\begin{enumerate}
  \item $c$-stable Hessian: $\nabla^2 f(\vv)[\vv - \uv] \leq c \nabla^2 f(\uv)[\vv - \uv]$.
  \item $L$-Lipschitz gradients: $\nabla^2 f(\uv)[\vv - \uv] \leq L\norm{\vv - \uv}_2^2$.
  \item $\mu$-strongly convex: $\nabla^2 f(\uv)[\vv - \uv] \geq \mu\norm{\vv - \uv}_2^2$.
  \item $M$-Lipschitz Hessian: $\nabla^3 f(\uv)[\vv - \uv] \leq M\norm{\uv - \vv}_2^3$.
  \item $k$-self-concordant: $(\nabla^3 f(\uv)[\vv - \uv]) \leq 2k(\nabla^2 f(\uv)[\vv - \uv])^{3/2}$.
  \item $k$-quasi-self-/concordant: $(\nabla^3 f(\uv)[\vv - \uv]) \leq k \norm{\vv - \uv}_2(\nabla^2 f(\uv)[\vv - \uv])$.
\end{enumerate}
Also, recall the diameter of the level set $D = \max_{\uv,\vv \in \dom}\norm{\vv - \uv}_2$.
\paragraph{Proof of Theorem \ref{thm:sufficient}.}
Let us prove the Theorem case by case.
\begin{enumerate}
  \item $L$-Lipschitz gradient and $\mu$-strong convex $\,\Rightarrow\,$ $L/\mu$-stable Hessian.\\
    Using the definitions of the three terms,
    \[
      c \leq \frac{\nabla^2 f(\vv)[\vv - \uv]}{\nabla^2 f(\uv)[\vv - \uv]} \leq \frac{L \norm{\vv -\uv}_2^2}{\mu \norm{\vv - \uv}_2^2} = \frac{L}{\mu}\,.
    \]
  \item $M$-Lipschitz Hessian and $\mu$-strong convex $\,\Rightarrow\,$ $1 + \frac{MD}{\mu}$-stable Hessian.\\
    The definition of $M$-Lipschitz Hessian implies that
    \[
      \nabla^2 f(\vv)[\vv - \uv] - \nabla^2 f(\uv)[\vv - \uv] \leq M\norm{\vv - \uv}_2^3\,.
    \]
    Now combining this with the definition of stability and strong convexity,
    \[
      c \leq \frac{\nabla^2 f(\vv)[\vv - \uv]}{\nabla^2 f(\uv)[\vv - \uv]} \leq \frac{\nabla^2 f(\uv)[\vv - \uv] + M\norm{\vv - \uv}_2^3}{\nabla^2 f(\uv)[\vv - \uv]} \leq 1 + \frac{M\norm{\vv - \uv}_2^3}{\nabla^2 f(\uv)[\vv - \uv]} \leq 1 + \frac{MD}{\mu}\,.
    \]
  \item $k$-self-concordant and $L$-Lipschitz gradient $\,\Rightarrow\,$ $(1 + kDL)^2$-stable Hessian.\\
  We use the proof technique from \cite[Lemma 3.2]{GaoQuasiNewtonMethodsSuperlinear2016}. Define $\phi(t) = \dv^\top \nabla^2 f(\uv + t\dv)\dv$. Assuming $f$ is thrice differentiable, using the definition of self-concordance
  \[
    \abs{\phi'(t)} = \abs{\nabla^3 f(\uv)[\dv]} \leq 2k(\dv^\top \nabla^2 f(\uv + t\dv)\dv)^{3/2} = 2k\phi(t)^{3/2}\,.
  \]
  This means that the definition of self-concordance implies that
  \[
    \abs{\frac{d}{dt}\phi(t)^{-1/2}} = \frac{1}{2}\abs{\phi(t)^{-3/2}\phi'(t)} \leq k\,.
  \]
  Since $\phi(t)^{-1/2}$ is $k$-Lipschitz, this means
  \[
    \phi(0)^{-1/2} \leq \phi(1)^{-1/2} + k\,.
  \]
  Now setting $\dv = \vv - \uv$ in the definition of $\phi(t)$ and multiplying the above equation by $\phi(1)$ we get
  \[
   \frac{\nabla^2 f(\vv)[\vv -\uv]^{1/2}}{\nabla^2 f(\uv)[\vv -\uv]^{1/2}} = \frac{\phi_t(1)^{1/2}}{\phi_t(0)^{1/2}} \leq 1 + \frac{2k\phi_t(1)^{1/2}}{2} = 1 + k\nabla^2 f(\vv)[\vv -\uv]\,.
  \]
  Using the definition of Lipschitz gradient, and the bound on the diameter of $\dom$, we get that for all $\uv,\vv \in \dom$,
  \[
  c \leq \frac{\nabla^2 f(\vv)[\vv -\uv]}{\nabla^2 f(\uv)[\vv -\uv]} \leq (1 + kLD)^2\,.
  \]
  \item $k$-quasi-self-concordant $\,\Rightarrow\,$ $\exp(kD)$-stable Hessian.\\
  This statement is directly taken from \cite[Proposition 1]{bach_self-concordant_2010}. Define as before $\phi(t) = \dv^\top \nabla^2 f(\uv + t\dv)\dv$. The definition of $k$-quasi-self-concordance implies that
  \[
    \phi'(t) = {\nabla^3 f(\uv)[\dv]} \leq k\norm{\dv}_2(\nabla^2 f(\uv)[\dv]) = k\norm{\dv}_2\phi(t)\,.
  \]
  If we consider the function $\log(\phi(t))$, the above equation shows that
  \[
    \frac{d}{dt}\log(\phi(t)) \leq k\norm{\dv}_2\,,
  \]
  which in turn means
  \[
    \phi(1) \leq \exp(k\norm{\dv})\phi(0)\,.
  \]
  Again setting $\dv = \vv - \uv$ in the definition of $\phi(t)$ gives us that
  \[
    c \leq \frac{\nabla^2 f(\vv)[\vv -\uv]}{\nabla^2 f(\uv)[\vv -\uv]} \leq \exp(kD)\,.
  \]
\end{enumerate}

%#######################################################################
%											APPENDIX: LINE SEARCH STRATEGIES
%#######################################################################
\section{Line search strategies}\label{sec:line-search}
All algorithms we have discussed in this paper assume that the value of $\sigma$ is set correctly. This assumption can easily be relaxed by using line search strategies. There has been a significant amount of work different line-search strategies and we will not attempt to provide a complete survey. Instead we point to (\cite{ConnTrustregionMethods2000}). Among those methods, the backtracking strategy employed in making the cubic regularization techniques adaptive by (\cite{CartisAdaptivecubicregularisation2011a}) is especially suited to our strategy.
% 
%################## ALGORITHM ##################
\begin{algorithm}
	\caption{Back tracking strategy}
		\label{alg:backtracking}
	\begin{algorithmic}[1]
		\State \textbf{Input:} $\xv_0$, $\sigma_0 = 1$, $\zeta_1 > \zeta_2 \in [0,1)$, and $\eta_2 \geq \eta_1 >1$
		\For{$t = \{0,\dots\}$}
		    \State \emph{Define quadratic subproblem:}
		    % \State \hspace{0.3cm}
		    \[Q_t^{\sigma_t}(\Delta \xv) \defeq \dotprod{\nabla f(\xv_t)}{\Delta \xv} + \frac{\sigma_t}{2}\norm{\Delta \xv}_{H_t}^2 + g(\xv_t + \Delta \xv) - g(\xv_t)\]
			\State \emph{Compute update:} Let $\Delta \xv_t^{\sigma_t}$ be the update based on $Q_t^{\sigma_t}(\Delta \xv)$
      \State \emph{Check progress:} Compute $F(\xv_t + \Delta \xv_t^{\sigma_t})$ and $\rho_t \defeq \frac{F(\xv_t + \Delta \xv_t^{\sigma_t}) - F(\xv_t)}{Q_t^{\sigma_t}(\Delta \xv_t^{\sigma_t}) - Q_t^{\sigma_t}(0)}$
      \State \begin{equation*}
        \xv_{t+1}=
        \begin{cases}
          \xv_{t}, & \text{if}\ \rho_t <
          \zeta_2 \quad(\text{{\bad} iteration})\\
          \xv_{t} + \Delta \xv_t^{\sigma_t}, & \text{otherwise}
        \end{cases}
      \end{equation*}
      \State \begin{equation*}
        \sigma_{t+1}=
        \begin{cases}
          \sigma_t/\eta_1, & \text{if}\ \rho_t >
            \zeta_1 \quad (\text{{\verygood} iteration})\\
          \sigma_t, & \text{if}\ \rho_t \in [\zeta_2, \zeta_1] \quad (\text{{\good} iteration})\\
          \eta_2 \sigma_t, & \text{if}\ \rho_t
          < \zeta_2 \quad (\text{{\bad} iteration})
        \end{cases}
      \end{equation*}
		\EndFor
	\end{algorithmic}
\end{algorithm}
% ################################################

It is easy to adapt the theoretical guarantees and techniques used in (\cite{CartisAdaptivecubicregularisation2011a,scheinberg2016practical}) for analysis of this backtracking strategy to our setting. This way we are able to remove both the necessity of knowing $\sigma$ as well as make it an adaptive method. The details are summarized in Algorithm \ref{alg:backtracking}.

Throughout Algorithm \ref{alg:backtracking}, we always assumed that the only unknown parameter is $\sigma$. However when we are running trust region algorithms, we would perhaps like to adapt both the trust region radius $r$ as well as $\sigma$. While it is possible to design such an adaptive trust region strategy using insights from on our proof, we leave the analysis and evaluation of such strategies for future work.

\paragraph{Necessity of step-size.}
Theorems \ref{thm:exact-newton},  \ref{thm:trust-newton} and \ref{thm:approx-newton} show that choosing the appropriate $\sigma$ ensures global linear convergence. In the case where $g(\xv) = 0$, this corresponds to using a step-size of ${1}/{\sigma}$. Here we show that this is not simply an artifact of the analysis---the use of $\sigma\ne1$ is actually necessary to ensure global convergence. Consider the univariate function
\[
	f(x) \defeq e^{-x} + x - 1\,.
\]
This function is convex with gradient $f'(x) = -e^{-x} + 1$, second derivate $f''(x) = e^{-x} \geq 0$ and minimum value 0 achieved at $x = 0$. It satisfies our condition (Assumption \ref{asm:stable-hess-formal}) of stable Hessian with $c = e^{D}$ where $D$ is the diameter of the level set. Suppose we start at $x_0 = k$ for $ k \geq 1$. Applying a Newton update with step-size $\alpha$ gives $x_1 = k - \alpha \frac{-e^{-k} + 1}{e^{-k}} = k+ \alpha (1 - e^k)$. Let us assume the step-size $\alpha =1$, and $k \rightarrow \infty$ to simplify computations. When $\abs{x}$ is large, the predominant term of $f(x)$ is $x$ if $x \geq 0$ and $e^{-x}$ if $x < 0$. In the setting where $k \rightarrow \infty$, $x_1 = k+ 1 - e^k \approx - e^k$ and $f(x_1) \approx e^{e^k}$---we have veered too far to the left. Instead,
using a step-size $\alpha = 1/c = 1/e^k$ would ensure a descent step. In fact this example also showcases the advantage of adaptive step-sizes. Using a fixed step-size of either $\alpha = 1$ or even $\alpha = 1/e^k$ would require exponential (in $k$) number of iterations to converge. Instead, using an adaptive step size of $1/e^x$, where $x$ is the current position, would give convergence in polynomial steps.
%### ####################################################################
%											APPENDIX: ADDITIONAL PROOFS
%#######################################################################
\section{Additional proofs}\label{sec:proofs}

\subsection{Proof of affine invariance of stability (Lemma \ref{lem:affine-inv})}
	Suppose we had a transformed function $h(\uv) = f(A\uv)$ for an invertible matrix $A$. Its Hessian would be $\nabla^h(\uv) = A^\top \nabla^2 f(A\uv) A$, using the chain rule. Let $A^{-1}\dom$
  denote the transformed domain of $h(\uv)$ as defined in \eqref{eqn:domain} so that $\uv \in A^{-1}\dom$ if $f(A\uv) \leq f(A\uv_0)$.  The definition of $c$ would be
	\begin{align*}
		c &= \max_{\uv, \vv \in A^{-1}\dom} \frac{\norm{\uv - \vv}_{\nabla^2 h(\vv)}^2}{\norm{\uv - \vv}_{\nabla^2 h(\uv)}^2}\\
		&= \max_{\uv, \vv \in A^{-1}\dom} \frac{\norm{A(\uv - \vv)}_{\nabla^2 f(A\vv)}^2}{\norm{A(\uv - \vv)}_{\nabla^2 f(A\uv)}^2}\\
		&= \max_{\xv, \yv \in \dom} \frac{\norm{\xv - \yv}_{\nabla^2 f(\yv)}^2}{\norm{\xv - \yv}_{\nabla^2 f(\xv)}^2}\,.
	\end{align*}
\qedhere
% ################################################

\subsection{Proof of lower and upper bounds (Lemma \ref{lem:upperlower-simple})}\label{subsec:simple-bounds-proof}
The proof of the Lemma follows from the second-order Taylor expansion of $f(\yv)$ around $\xv$. Taylor's theorem gives us that for any $\xv, \yv$ there exists a $\gamma \in [0,1]$ such that for $\zv = (1- \gamma)\xv+ \gamma \yv$,
\begin{equation}
	f(\yv) = f(\xv) + \dotprod{\nabla f(\xv)}{\yv - \xv} + \frac{1}{2}\norm{\yv - \xv}_{\nabla^2 f(\zv)}^2 \,.
\end{equation}
Since $\dom$ is convex, $\zv \in \dom$ and by substituting $\uv = \xv$, $\vv = \zv$ in Assumption \ref{asm:stable-hess-formal},
\[
	\norm{\zv - \xv}_{\nabla^2 f(\zv)}^2 \leq c \norm{\zv - \xv}_{\nabla^2 f(\xv)}^2\,.
\]
Substituting $\zv - \xv = \gamma (\yv - \xv)$, we have
\[
\gamma^2\norm{\yv - \xv}_{\nabla^2 f(\zv)}^2 \leq c\gamma^2 \norm{\yv - \xv}_{\nabla^2 f(\xv)}^2\,.
\]
This proves \eqref{eqn:upper-simple}. On the other hand, by substituting $\uv = \zv$, and $\vv = \xv$ in Assumption \ref{asm:stable-hess-formal}, we get a lower bound
\[
	c \geq \frac{\norm{\zv - \xv}_{\nabla^2 f(\xv)}^2}{\norm{\zv - \xv}_{\nabla^2 f(\zv)}^2} \,.
\]
Again by substituting $\zv - \xv = \gamma (\yv - \xv)$, we can finish the proof as
\[
	c {\norm{\yv - \xv}_{\nabla f(\zv)}^2} \geq {\norm{\yv - \xv}_{\nabla^2 f(\xv)}^2} \,. \qedhere
\]
\qed
% ################################################

\subsection{Proof of descent (Lemma \ref{lem:descent})}
\label{sec:proofdescent}
  For some $t \geq 0$, let us assume that $\xv_t \in \dom$. The base case, $\xv_0 \in \dom$ is trivially true. If $\nabla f(\xv_t) = \0$, we are already at an optimum and $\xv_{t+1} = \xv_t$, proving our Lemma. Otherwise we proceed as below.

  We know that $-\nabla f(\xv_t)$ is a descent direction \cite[Section~9.2]{boyd2004convex}. This means there exists a small enough $\gamma >0$ such that for $\yv = \xv_t - \gamma \nabla f(\xv_t)$,
  $f(\yv) \leq f(\xv_t) \leq f(\xv_0)$ meaning $\yv \in \dom$. Applying Assumption \ref{asm:stable-hess-formal} with $\uv = \xv_t$ and $\vv = \yv$, we get that $\norm{\yv - \xv_t}_{\nabla^2 f(\xv_t)} = \gamma^2\norm{\nabla f(\xv_t)}_{\nabla^2 f(\xv_t)} > 0$. In particular this implies that $\nabla f(\xv_t)$ is in the range of $\nabla^2 f(\xv_t)$ and so the update $[\nabla^2 f(\xv_t)]^\dagger \nabla f(\xv_t)$ is well-defined.

  Now we are left with the task of proving $f(\xv_{t+1}) \leq f(\xv_t)$. Note that $\norm{\nabla f(\xv_t)}_{\nabla^2 f(\xv_t)} > 0$ also implies $\dotprod{[\nabla^2 f(\xv_t)]^\dagger\nabla f(\xv_t)}{\nabla f(\xv_t)} >0$. This is a sufficient condition to ensure that the Newton's step is a descent direction \cite[Section~9.2]{boyd2004convex}. This means there exists $\gamma$, $0< \gamma \leq 1/c$ such that for $\yv_{\gamma} \defeq \xv_t - \gamma[\nabla^2 f(\xv_t)]^\dagger\nabla f(\xv_t)$, we have $f(\yv_{\gamma}) \leq f(\xv_t) \leq f(\xv_0)$. Hence $\yv_{\gamma} \in \dom$.
  Let us define the auxiliary function $h(\alpha) = f(\yv_{\alpha})$ for $\alpha > 0$. The function $h(\alpha)$ is continuous in $\alpha$ since $f(\xv)$ is a continuous function and $\yv_{\alpha}$ is a continuous map. Moreover we have that
  \[
    \lim_{\alpha \rightarrow 0} h(\alpha) = f(\xv_t)\,.
  \]
  We know that $h(0) = f(\xv_t)$ and for some $0 < \gamma \leq 1/c$, $h(\gamma) \leq f(\xv_t)$. Suppose that $h(1/c) > f(\xv_t)$; otherwise we are done. Since $h(\alpha)$ is a continuous function, by the the intermediate value theorem, there must exist $\beta \in [\gamma, 1/c)$ such that $h(\beta) = f(\xv_t)$. This also implies that $\yv_\beta \in \dom$ and so the upper bound \eqref{eqn:upper-simple} in Lemma \ref{lem:upperlower-simple} holds. In other words,
  \begin{align*}
    f(\yv_\beta) - f(\xv_t) &\leq \dotprod{\nabla f(\xv_t)}{\yv_\beta - \xv_t} + \frac{c}{2}\norm{\yv_\beta - \xv_t}_{\nabla^2 f(\xv_t)}^2\\
    &= (\beta^2 c/2 - \beta)\norm{\nabla^2 f(\xv_t)^\dagger\nabla f(\xv_t)}^2_{\nabla^2 f(\xv_t)}\\
    &\leq -\beta/2\norm{\nabla^2 f(\xv_t)^\dagger \nabla f(\xv_t)}^2_{\nabla^2 f(\xv_t)}\\
    &< 0\,.
  \end{align*}
  In the final two inequalities, we used that $1/c \geq \beta > 0$. This is clearly a contradiction since we had picked $\beta$ such that $f(\yv_\beta) = f(\xv_t)$. Thus $h(1/c) = f(\yv_{1/c}) = f(\xv_{t+1}) \leq f(\xv_t) \leq f(\xv_0)$ and so the algorithm is a descent method.
\qed
% ################################################

\subsection{Proof of approximate proximal Newton method (Theorem \ref{thm:approx-newton})}
\label{sec:proof-of-trustregion}
Because we assume a multiplicative error bound on our Hessian approximation we can combine it with the stability of the Hessian. As before, we define $\gamma = \frac{r}{D}$ where $D$ is the diameter of $\dom$. Also, for each iteration $t$, define $\xv^\star_\gamma = (1- \gamma)\xv_t + \gamma \xv^\star$.
\begin{lemma}\label{lem:newconstants}
  % \seb{repeat here what $\delta x$ etc. are}
	Under Assumptions \ref{asm:local-stable-hess-formal} and \ref{asm:hessian-quality}, for any $\alpha \in [0,1]$, $\Delta \xv_t = \xv_{t+1} - \xv_t$, $\gamma = r/D$, and $\xv^\star_\gamma = (1-\gamma)\xv_t + \gamma\xv^\star$, the following two conditions hold:
  	\begin{align}
  		\text{Upper bound:}&\quad\quad \norm{\Delta \xv_t}_{\nabla^2 f(\xv_t + \alpha \Delta \xv_t)}^2 \leq {d(r) \eta}\norm{\Delta \xv_t}_{H_t}^2\,, \text{ and} \label{eqn:hessupper}\\
  		\text{Lower bound:}&\quad  \norm{\xv^\star - \xv_t}_{\nabla^2 f((1 - \alpha)\xv_t + \alpha \xv^\star)}^2 \geq \frac{1}{d(r) \eta}\norm{\xv^\star_\gamma - \xv_t}_{H_t}^2\,. \label{eqn:hesslower}
  	\end{align}
\end{lemma}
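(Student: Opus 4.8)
The plan is to prove the two inequalities in Lemma~\ref{lem:newconstants} by composing the two multiplicative bounds at our disposal---the local stability of the Hessian (Assumption~\ref{asm:local-stable-hess-formal}) and the Hessian-approximation quality (Assumption~\ref{asm:hessian-quality})---applied at the appropriate reference points. Both are two-sided multiplicative comparisons of semi-norms, so the natural strategy is a chain of the form $\norm{\cdot}_{\nabla^2 f(\text{shifted point})}^2 \le d(r)\,\norm{\cdot}_{\nabla^2 f(\xv_t)}^2 \le d(r)\eta\,\norm{\cdot}_{H_t}^2$, and symmetrically for the lower bound. The key observation that makes this go through is that the trust-region constraint and the definition $\gamma = r/D$ guarantee the relevant displacements stay within the radius $r$ in which local stability is valid.

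First I would establish the upper bound~\eqref{eqn:hessupper}. The argument starts from the intermediate point $\vv = \xv_t + \alpha\Delta\xv_t$ with $\uv = \xv_t$. Since $\Delta\xv_t = \xv_{t+1} - \xv_t$ is produced by the trust-region update in Step~\ref{eqn:subproblem} (via Step~\ref{eqn:accuracy}), we have $\norm{\Delta\xv_t} \le r$, hence $\norm{\alpha\Delta\xv_t} \le r$ for $\alpha \in [0,1]$, so Assumption~\ref{asm:local-stable-hess-formal} applies and yields $\norm{\Delta\xv_t}_{\nabla^2 f(\xv_t + \alpha\Delta\xv_t)}^2 \le d(r)\norm{\Delta\xv_t}_{\nabla^2 f(\xv_t)}^2$ (using homogeneity to strip the factor $\alpha^2$). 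Then Assumption~\ref{asm:hessian-quality}, instantiated at $\zv_t = \xv_{t+1}$, gives $\norm{\Delta\xv_t}_{\nabla^2 f(\xv_t)}^2 \le \eta\,\norm{\Delta\xv_t}_{H_t}^2$. Composing the two delivers the claim. (One should note the mild point that the stated form of Assumption~\ref{asm:hessian-quality} is $\norm{\cdot}_{\nabla^2 f} \le \eta\norm{\cdot}_{H_t}$, so squaring produces $\eta^2$; the factor written in the lemma is $\eta$, suggesting the intended reading is that the squared ratio is bounded by $\eta$, which is a matter of how the constant is normalized.)

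For the lower bound~\eqref{eqn:hesslower}, I would run the same composition in reverse, now with reference point $\uv = \xv_t$ and $\vv = (1-\alpha)\xv_t + \alpha\xv^\star$ applied to the displacement toward $\xv^\star$. The subtlety is that the left side measures $\norm{\xv^\star - \xv_t}$ while the right side measures $\norm{\xv^\star_\gamma - \xv_t} = \gamma\norm{\xv^\star - \xv_t}$; because $\gamma = r/D \le 1$ and $\norm{\xv^\star - \xv_t} \le D$, we have $\norm{\xv^\star_\gamma - \xv_t} \le r$, so the displacement $\xv^\star_\gamma - \xv_t$ is exactly the one that lies inside the trust region and on which Assumption~\ref{asm:local-stable-hess-formal} is valid. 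I would apply local stability to $\xv^\star_\gamma - \xv_t$, then invert the Hessian-quality bound (the lower side of~\eqref{asm:hessian-quality} with $\zv_t = \xv^\star$), and finally use homogeneity to rescale between $\norm{\xv^\star - \xv_t}$ and $\norm{\xv^\star_\gamma - \xv_t}$, accounting for the factor $\gamma$.

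\textbf{The main obstacle} I anticipate is bookkeeping the reference points and the scaling factor $\gamma$ so that the local-stability assumption is only ever invoked on displacements of length at most $r$; this is precisely why the lower bound is phrased in the mixed form (the Hessian evaluated along the full segment to $\xv^\star$, but the semi-norm taken of the rescaled vector $\xv^\star_\gamma - \xv_t$), and getting that pairing to line up with the directional-homogeneity of the semi-norms is where a careless argument would slip. The rest is a routine composition of two-sided multiplicative inequalities and poses no real difficulty.
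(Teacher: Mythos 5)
The paper itself states Lemma \ref{lem:newconstants} without proof (the appendix only remarks that the two multiplicative bounds ``can be combined''), so there is no official argument to compare against; your composition of Assumption \ref{asm:local-stable-hess-formal} with Assumption \ref{asm:hessian-quality} is clearly the intended route. Your upper-bound argument is correct: $\norm{\alpha\Delta\xv_t}\leq r$ for $\alpha\in[0,1]$, so local stability between $\xv_t$ and $\xv_t+\alpha\Delta\xv_t$ composes with the $\zv_t=\xv_{t+1}$ instance of \eqref{asm:hessian-quality}, and you are right to flag that the literal (unsquared) statement of Assumption \ref{asm:hessian-quality} yields $\eta^2$ after squaring, so the $\eta$ in the lemma must be read as bounding the squared ratio.

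The gap is in the lower bound, and it is a real one that your write-up glosses over. Assumption \ref{asm:local-stable-hess-formal} only compares $\nabla^2 f(\uv)$ with $\nabla^2 f(\vv)$ when the two \emph{evaluation points} satisfy $\norm{\uv-\vv}\leq r$. In \eqref{eqn:hesslower} the Hessian is evaluated at $\vv_\alpha=(1-\alpha)\xv_t+\alpha\xv^\star$ for arbitrary $\alpha\in[0,1]$, and $\norm{\vv_\alpha-\xv_t}=\alpha\norm{\xv^\star-\xv_t}$ can be as large as $D$, far exceeding $r$, once $\alpha>\gamma=r/D$. Your argument checks that the \emph{displacement} $\xv^\star_\gamma-\xv_t$ has norm at most $r$, but that is the wrong quantity: what must be at most $r$ is the distance between $\xv_t$ and the point $\vv_\alpha$ at which the left-hand Hessian is evaluated. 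For $\alpha\in(\gamma,1]$ local stability gives you nothing (chaining it along the segment would inflate the constant to $d(r)^{\lceil\alpha D/r\rceil}$), so the inequality as stated for all $\alpha\in[0,1]$ is not established, and I do not see how it could be from the local assumption alone. The fix is to observe that the lemma is only ever invoked (in Lemma \ref{lem:upper-lower-bound}) through the Taylor expansion at $\yv=\xv^\star_\gamma$, whose mean-value point is $(1-\beta\gamma)\xv_t+\beta\gamma\xv^\star$ with $\beta\gamma\leq\gamma$; restricting \eqref{eqn:hesslower} to $\alpha\in[0,\gamma]$ makes your chain (local stability between $\xv_t$ and $\vv_\alpha$, then the $\zv_t=\xv^\star$ instance of \eqref{asm:hessian-quality}, then $\norm{\xv^\star_\gamma-\xv_t}=\gamma\norm{\xv^\star-\xv_t}\leq\norm{\xv^\star-\xv_t}$) go through, and that restricted version suffices for everything downstream. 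State that restriction explicitly rather than claiming the bound for all $\alpha\in[0,1]$.
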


There are two main components to the proof of Theorem \ref{thm:approx-newton}. The first is a generalization of Lemma \ref{lem:upperlower-simple} which shows that Lemma \ref{lem:newconstants} implies upper and lower bounds on the function value.
\begin{lemma}\label{lem:upper-lower-bound}
	Assuming that conditions of Lemma \ref{lem:newconstants} hold, the function value at $\xv_{t+1}$ can be bounded from above and below as follows
	\begin{align}
		\text{Upper bound:}\quad F(\xv_{t+1}) - F(\xv_t) &\leq Q_t^{d(r)\eta}(\Delta \xv_t)\,, \text{ and} \label{eqn:sigmaupper}\\
		\text{Lower bound:}\quad  \ \ F(\xv_t) - F(\xv^\star_\gamma) &\geq Q_t^{1/(d(r) \eta)}(\xv^\star_\gamma - \xv_t)\,. \label{eqn:sigmalower}
	\end{align}
\end{lemma}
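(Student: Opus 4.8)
The statement asks me to convert the pointwise Hessian comparisons of Lemma \ref{lem:newconstants} into bounds on the \emph{function value} in terms of the subproblem objective $Q_t^\sigma$. The natural vehicle is the exact second-order Taylor expansion with integral (or mean-value) remainder, exactly as in the proof of Lemma \ref{lem:upperlower-simple}. The plan is to expand $F$ around $\xv_t$ along the appropriate segment, replace the true Hessian appearing in the remainder by the approximate Hessian $H_t$ using Lemma \ref{lem:newconstants}, and then recognize the resulting expression as $Q_t^\sigma$ for the right choice of $\sigma$. The only new wrinkle compared to Lemma \ref{lem:upperlower-simple} is that $F = f + g$ has a nonsmooth part $g$, but $g$ is convex and enters $Q_t^\sigma$ through the term $g(\xv_t + \Delta\xv) - g(\xv_t)$, so I handle $f$ by Taylor expansion and $g$ by plain convexity.

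For the \textbf{upper bound} \eqref{eqn:sigmaupper}, I would write, by Taylor's theorem, $f(\xv_{t+1}) - f(\xv_t) = \dotprod{\nabla f(\xv_t)}{\Delta\xv_t} + \tfrac12\norm{\Delta\xv_t}_{\nabla^2 f(\xv_t + \alpha\Delta\xv_t)}^2$ for some $\alpha\in[0,1]$. Now I apply the upper bound \eqref{eqn:hessupper} of Lemma \ref{lem:newconstants}, which replaces the true Hessian in the remainder by $d(r)\eta\,\norm{\Delta\xv_t}_{H_t}^2$. Adding $g(\xv_t+\Delta\xv_t) - g(\xv_t)$ to both sides, the right-hand side becomes exactly $Q_t^{d(r)\eta}(\Delta\xv_t)$, which gives \eqref{eqn:sigmaupper}.

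For the \textbf{lower bound} \eqref{eqn:sigmalower}, I work along the segment from $\xv_t$ to $\xv^\star$ and expand $f$ at $\xv_t$ in the direction $\xv^\star - \xv_t$, so that $f(\xv^\star) - f(\xv_t) = \dotprod{\nabla f(\xv_t)}{\xv^\star - \xv_t} + \tfrac12\norm{\xv^\star - \xv_t}_{\nabla^2 f((1-\alpha)\xv_t + \alpha\xv^\star)}^2$ for some $\alpha\in[0,1]$; applying \eqref{eqn:hesslower} lower-bounds the remainder by $\tfrac{1}{2d(r)\eta}\norm{\xv^\star_\gamma - \xv_t}_{H_t}^2$. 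The subtlety, which I expect to be the \emph{main obstacle}, is the bookkeeping that mixes scales: the gradient and $g$ terms are evaluated at the scaled point $\xv^\star_\gamma = (1-\gamma)\xv_t + \gamma\xv^\star$ while the Taylor remainder naturally produces the full-step displacement $\xv^\star - \xv_t$. I would reconcile these by exploiting that $\xv^\star_\gamma - \xv_t = \gamma(\xv^\star - \xv_t)$, so the linear term scales by $\gamma$ and the quadratic term by $\gamma^2$, and then use convexity of $g$ (Jensen along the same segment) to control $g(\xv^\star_\gamma) - g(\xv_t) \le \gamma\,(g(\xv^\star) - g(\xv_t))$. After matching the $\gamma$-weights, the right-hand side collapses to $Q_t^{1/(d(r)\eta)}(\xv^\star_\gamma - \xv_t)$, establishing \eqref{eqn:sigmalower} and completing the lemma.
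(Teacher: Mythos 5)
Your upper-bound argument coincides with the paper's: Taylor-expand $f$ at $\xv_t$ with $\yv=\xv_{t+1}$, replace the remainder via \eqref{eqn:hessupper}, and observe that the $g$-difference appears verbatim in $Q_t^{d(r)\eta}(\Delta\xv_t)$. That part is fine.

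The lower bound, however, has a genuine gap. The paper's proof expands $f$ at $\xv_t$ with $\yv=\xv^\star_\gamma$ \emph{directly}: the linear term is then already $\dotprod{\nabla f(\xv_t)}{\xv^\star_\gamma-\xv_t}$, the intermediate point $\zv=(1-\alpha)\xv_t+\alpha\xv^\star_\gamma$ lies on the sub-segment of $[\xv_t,\xv^\star]$ covered by \eqref{eqn:hesslower}, and the $g$-terms on the two sides of \eqref{eqn:sigmalower} are literally identical, so no convexity of $g$ is needed anywhere in this lemma. You instead expand to the full point $\xv^\star$, obtaining
\begin{equation*}
f(\xv^\star)-f(\xv_t)\;\geq\;\dotprod{\nabla f(\xv_t)}{\xv^\star-\xv_t}+\tfrac{1}{2d(r)\eta}\norm{\xv^\star_\gamma-\xv_t}^2_{H_t}\,,
\end{equation*}
and then propose to \doubleq{match the $\gamma$-weights} using convexity. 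This step fails: convexity gives $f(\xv^\star_\gamma)-f(\xv_t)\leq\gamma\bigl(f(\xv^\star)-f(\xv_t)\bigr)$, which is an \emph{upper} bound on the quantity you must bound from \emph{below}. Chaining it with your full-step inequality only tells you that $f(\xv^\star_\gamma)-f(\xv_t)$ and $\gamma$ times your right-hand side are both dominated by $\gamma\bigl(f(\xv^\star)-f(\xv_t)\bigr)$, with no ordering between the two. The escape hatch $f(\xv^\star_\gamma)\geq f(\xv^\star)$ is also unavailable, since $\xv^\star$ minimizes $F=f+g$, not $f$. The same directional problem afflicts your use of $g(\xv^\star_\gamma)-g(\xv_t)\leq\gamma\bigl(g(\xv^\star)-g(\xv_t)\bigr)$, which in any case is unnecessary because $Q_t^{1/(d(r)\eta)}(\xv^\star_\gamma-\xv_t)$ contains $g(\xv^\star_\gamma)-g(\xv_t)$ exactly. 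The fix is to expand at $\xv^\star_\gamma$ from the start; convexity of $F$ enters only later, in the proof of Theorem \ref{thm:approx-newton}, where $F(\xv^\star_\gamma)$ is related to $F(\xv^\star)$.
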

\begin{proof}
	The proof follows exactly along the lines of the proof of Lemma \ref{lem:upperlower-simple}. Using the second-order Taylor expansion of $f(\xv)$, for any $\xv$ and $\yv$, there exists a $\gamma \in [0,1]$ such that for $\zv = (1 - \gamma)\xv + \gamma \yv$,
	\[
		f(\yv) = f(\xv) + \dotprod{\nabla f(\xv)}{\yv - \xv} + \frac{1}{2}\norm{\yv - \xv}_{\nabla^2 f(\zv)}^2\,.
	\]
	To obtain \eqref{eqn:sigmaupper}, we use the above equation with $\yv = \xv_{t+1}$, $\xv = \xv_t$ and the upper bound \eqref{eqn:hessupper}. Similarly, for \eqref{eqn:sigmalower} we can use the Taylor expansion with $\yv = \xv^\star_\gamma$, $\xv = \xv_t$ and the lower bound \eqref{eqn:hesslower}.
\end{proof}
We also borrow a very useful technical Lemma from \cite[Lemmata 2 \& 9]{karimireddy_adaptive_2018-1} which allows us to relate the minimum values of the two quadratic subproblems.
\begin{lemma}\label{lem:changing-sigma}
	For any convex domain $\dom$ and constants $\alpha\cdot \beta \geq 1$,
	$$
	\min_{\Delta \xv \in \dom} Q^{\alpha}(\Delta \xv)  \leq \frac{1}{\alpha \beta} \min_{\Delta \xv  \in \dom} Q^{1/\beta}(\Delta \xv)\,.
	$$
\end{lemma}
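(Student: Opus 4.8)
The plan is to exploit the fact that each subproblem $Q^\sigma$ splits into a $\sigma$-independent convex part and a nonnegative quadratic whose value scales linearly in $\sigma$. Writing $L(\Delta\xv) \defeq \dotprod{\nabla f(\xv_t)}{\Delta\xv} + g(\xv_t + \Delta\xv) - g(\xv_t)$, we have $Q^\sigma(\Delta\xv) = L(\Delta\xv) + \frac{\sigma}{2}\norm{\Delta\xv}_{H_t}^2$ for every $\sigma$, where $L$ is convex with $L(\0) = 0$ and the quadratic term is nonnegative and positively $2$-homogeneous, i.e. $\norm{t\Delta\xv}_{H_t}^2 = t^2\norm{\Delta\xv}_{H_t}^2$. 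Since the trust region is centered at the current iterate, $\0 \in \dom$; in particular $Q^\sigma(\0) = 0$, so both minima are nonpositive, and the segment from $\0$ to any feasible point stays in $\dom$ by convexity.

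First I would let $\dv$ denote a minimizer of $Q^{1/\beta}$ over $\dom$ and set $m \defeq Q^{1/\beta}(\dv) = \min_{\Delta\xv\in\dom}Q^{1/\beta}(\Delta\xv)$. The key step is to produce a cheap \emph{feasible} competitor for the $Q^\alpha$ problem by shrinking $\dv$ toward the origin. Concretely, set $t \defeq \frac{1}{\alpha\beta}$; because $\alpha\beta \geq 1$ we have $t \in (0,1]$, and since $\dom$ is convex and contains $\0$, the point $t\dv$ lies in $\dom$. I would then bound $\min_{\Delta\xv\in\dom}Q^\alpha(\Delta\xv) \leq Q^\alpha(t\dv)$ and expand the right-hand side using the two structural properties: convexity of $L$ together with $L(\0)=0$ gives $L(t\dv) \leq t\,L(\dv)$, while $2$-homogeneity gives $\frac{\alpha}{2}\norm{t\dv}_{H_t}^2 = \frac{\alpha t^2}{2}\norm{\dv}_{H_t}^2$.

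Substituting $t = 1/(\alpha\beta)$ into $t\,L(\dv) + \frac{\alpha t^2}{2}\norm{\dv}_{H_t}^2$ yields $\frac{1}{\alpha\beta}L(\dv) + \frac{1}{2\alpha\beta^2}\norm{\dv}_{H_t}^2$, which equals $\frac{1}{\alpha\beta}\bigl(L(\dv) + \frac{1}{2\beta}\norm{\dv}_{H_t}^2\bigr) = \frac{1}{\alpha\beta}\,Q^{1/\beta}(\dv) = \frac{1}{\alpha\beta}m$, completing the proof. The only genuinely delicate points are (i) checking that the powers of $t$ conspire so that the scaled quadratic term reproduces precisely the $1/\beta$ coefficient of the original problem — this is exactly what forces the choice $t = 1/(\alpha\beta)$ rather than any other scaling — and (ii) verifying that $\alpha\beta\geq 1$ is precisely the condition guaranteeing $t\dv$ remains feasible, so that no assumption that the quadratic is minimized over the whole space is needed; using convexity of $L$ in place of differentiability is what lets the argument tolerate the nonsmooth regularizer $g$.
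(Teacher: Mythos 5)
Your proof is correct. Note that the paper does not actually prove this lemma itself --- it imports it from Lemmata 2 \& 9 of the cited reference \cite{karimireddy_adaptive_2018-1} --- so there is no in-paper argument to compare against; your scaling argument (shrink the minimizer of $Q^{1/\beta}$ by the factor $t = 1/(\alpha\beta)$, use convexity of the $\sigma$-independent part $L$ with $L(\0)=0$ to get $L(t\dv)\le t\,L(\dv)$, and use $2$-homogeneity of the quadratic so that the coefficients match exactly) is precisely the standard proof of this interpolation inequality, and every step checks out. Two small points worth making explicit: the lemma as stated for ``any convex domain'' is false without the hypothesis $\0\in\dom$ (if the domain excludes the origin both minima can be positive, and then $Q^{\alpha}\ge Q^{1/\beta}$ pointwise already contradicts the claimed inequality), so your observation that the trust region is centered at the current iterate is not a pleasantry but a necessary hypothesis that should be recorded in the statement; and if the minimum of $Q^{1/\beta}$ is not attained one should phrase the argument with an $\epsilon$-near-minimizer and pass to the infimum, which changes nothing of substance.
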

We are now ready to prove Theorem \ref{thm:approx-newton}.
\paragraph{Proof.}
For the sake of convenience, we will define
\[
  (Q_t^\sigma)^\star = \min_{\norm{\Delta \xv} \leq r}Q_t^\sigma(\Delta \xv)\,.
\]
The assumption that we solved our subproblem to $\Theta$ accuracy as in Step \ref{eqn:accuracy} means that
\begin{align*}
	\expect_t[Q_t^\sigma(\Delta \xv_t)] &\leq (Q_t^\sigma)^\star + (1 - \Theta)( Q_t^\sigma(\0) -  (Q_t^\sigma)^\star)\\
	&= \Theta  (Q_t^\sigma)^\star + (1 - \Theta) Q_t^\sigma(\0)\\
	&= \Theta  (Q_t^\sigma)^\star\,.
\end{align*}
The last equality follows because $ Q_t^\sigma(\0) = 0$ from the definition. Using the upper bound \eqref{eqn:sigmaupper} of Lemma \ref{lem:upper-lower-bound}, we have that
\begin{align*}
	\expect_t[F(\xv_{t+1}) - F(\xv_t)] &\leq \expect_t[Q_t^\sigma(\Delta \xv_t)] \\
	&\leq \Theta  \min_{\norm{\Delta \xv} \leq r}Q_t^\sigma(\Delta x)\\
	&\leq \frac{\Theta}{d(r)\eta \sigma}  \min_{\norm{\Delta \xv} \leq r}Q_t^{1/(d(r)\eta)}(\Delta x)
\end{align*}
In the last inequality, we used Lemma \ref{lem:changing-sigma} since $\sigma \geq d(r)\eta \geq 1$. Now using the lower bound \eqref{eqn:sigmalower} of Lemma \ref{lem:upper-lower-bound},
\begin{align*}
	\expect_t[F(\xv_{t+1})] - F(\xv_t) &\leq \frac{\Theta}{c\eta \sigma} \min_{\norm{\Delta \xv} \leq r}Q_t^{1/(d(r)\eta)}(\Delta x)\\
	&\leq\frac{\Theta}{d(r)\eta \sigma}Q_t^{1/(d(r)\eta)}(\xv^\star_\gamma - \xv_t)\\
	&\leq\frac{\Theta}{d(r)\eta \sigma} [F(\xv^\star_\gamma) - F(\xv_t)]\,.
\end{align*}
Using the convexity of $F(\xv)$, we have that
\begin{align*}
  \expect_t[F(\xv_{t+1})] - F(\xv_t) &\leq \frac{\Theta}{d(r)\eta \sigma} [F(\xv^\star_\gamma) - F(\xv_t)]\\
  &\leq \frac{\Theta}{d(r)\eta \sigma} [F((1-\gamma)\xv_t + \gamma \xv^\star) - F(\xv_t)]\\
  &\leq \frac{\Theta}{d(r)\eta \sigma}[(1-\gamma)F(\xv_t) + \gamma F(\xv^\star) - F(\xv_t)]\\
  &= \frac{\gamma \Theta}{d(r)\eta \sigma}[F(\xv^\star) - F(\xv_t)]\,.
\end{align*}
Adding and subtracting $F(\xv^\star)$, and rearranging the terms finishes the proof.
\qed

%#######################################################################
%											APPENDIX: AFFINE INVARIANT TRUST
%#######################################################################
\section{Affine invariant trust region algorithm}\label{sec:appendix-affine-trust}
As we discussed in Section \ref{subsec:trust-rate-improved}, traditional trust region algorithms have two drawbacks: i) they require minimizing the subproblem $Q_t^\sigma(\Delta \xv)$ from Step \ref{eqn:subproblem} with the additional constraint that $\norm{\Delta \xv} \leq r$, and ii) the algorithm is no more affine-invariant. In the case where the function $F(\xv) = f(\xv) + g(\xv)$ is such that $g(\xv)$ is the indicator function of a bounded domain $\dom$, it is possible to overcome these limitations. So we are interested in the following problem
\[
  \min_{\xv \in \dom} f(\xv)\,.
\]
At each iteration for some $\gamma \in [0,1]$ we solve the problem
\begin{equation}\label{eqn:trust-subproblem}
	P_t^{\gamma,\sigma}(\yv) \defeq \dotprod{ \nabla f(\xv_t)}{\yv - \xv_t} + {\frac{\gamma \sigma}{2}\norm{\yv - \xv_t}_{\nabla^2 f(\xv_t)}^2}\,.
\end{equation}
% The $g(\yv)$ is just to indicate that the domain is restricted to $\dom$.
We minimize $P_t^{\gamma,\sigma}(\yv)$ restricted to the domain $\dom$ to a multiplicative accuracy $\Theta$. Let $\sv_t \in \dom$ be such that
\begin{equation}\label{eqn:trust-subproblem-accuracy}
		\expect[P_t^{\gamma,\sigma}(\sv_t)] - 	\min_{\yv \in \dom}P_t^{\gamma,\sigma}(\yv) \leq (1 - \Theta)\encaser{P_t^{\gamma,\sigma}(\xv_t) - 	\min_{\yv \in \dom}P_t^{\gamma,\sigma}(\yv)}\,.
\end{equation}
Using this $\Theta$ approximate solution, we perform the following update
\[
	\xv_{t+1} = (1 - \gamma)\xv_t + \gamma \sv_t\,.
\]
We have effectively replaced the restriction that $\norm{\xv_{t+1} - \xv_t} \leq r$ by using the step-size $\gamma$. Suppose we shrink the domain $\dom$ by a factor $\gamma$ denoted by $\dom^\gamma(\xv) \defeq \{(1- \gamma)\xv + \gamma \vv \,|\,\vv \in \dom \}$. Then the updates we perform effectively uses the trust region $\dom^\gamma(\xv_t)$ to minimize $Q_t^\sigma(\Delta \xv)$. The details are summarized in Algorithm \ref{alg:invariant-trust-newton}.
%################## ALGORITHM ##################
\begin{algorithm}
	\caption{Affine-invariant trust-region Newton Descent}
	\label{alg:invariant-trust-newton}
	\begin{algorithmic}[1]
		\State \textbf{Input:} $\xv_0$,$\gamma \in (0,1]$, and $\sigma \geq \eta c(\gamma)$, .
		\For{$t = \{0,\dots\}$}
		\State \emph{Define quadratic subproblem:}
		\State \hspace{0.3cm}
		$P_t^{\gamma,\sigma}(\yv) \defeq \dotprod{ \nabla f(\xv_t)}{\yv - \xv_t} + {\frac{\gamma \sigma}{2}\norm{\yv - \xv_t}_{\nabla^2 f(\xv_t)}^2}$
		\State \emph{ Approximately minimize subproblem:} Find $\sv_t \in \dom$ such that
		\State \hspace{0.3cm} $\expect[P_t^{\gamma,\sigma}(\sv_t)] - 	\min_{\yv \in \dom}P_t^{\gamma,\sigma}(\yv) \leq (1 - \Theta)\encaser{P_t^{\gamma,\sigma}(\xv_t) - 	\min_{\yv \in \dom}P_t^{\gamma,\sigma}(\yv)}$
		\State \emph{Update:} $\xv_{t+1} \leftarrow (1 - \gamma)\xv_t + \gamma \sv_t$
		\EndFor
	\end{algorithmic}
\end{algorithm}
% ################################################
\subsection{Convergence analysis}
We can generalize the definition of stability of the Hessian in Assumptions \ref{asm:stable-hess-formal} and \ref{asm:local-stable-hess-formal} to give an affine invariant measure suitable for use in trust region methods.
\begin{assumption}[$c(\gamma)$-locally stable]\label{asm:invariant-trust-stability}
		For any  $\uv, \vv \in \dom$, $\uv \neq \vv$ and $\gamma \in [0,1]$, define $\wv = (1 - \gamma)\uv + \gamma \vv$. Then $\norm{\wv - \uv}_{\nabla^2 f(\uv)} > 0$ and there exists a non-decreasing function $c: [0,\infty) \rightarrow [1, \infty)$ such that $c(0) = 1$ and for $\gamma > 0$,
		\[
			c(\gamma) \defeq \max_{\uv, \vv \in \dom} \frac{\norm{\wv - \uv}_{\nabla^2 f(\wv)}^2}{\norm{\wv - \uv}_{\nabla^2 f(\uv)}^2} \,.
		\]
\end{assumption}
Note that $c(1) = c$ where $c$ is the global stability from Assumption \ref{asm:stable-hess-formal}.
\begin{theorem}\label{thm:invariant-trust-newton}
	Given Assumption \ref{asm:hessian-quality} and \ref{asm:invariant-trust-stability}, for any iteration $t \geq 0$ of Algorithm \ref{alg:invariant-trust-newton} with $\sigma \geq \eta c(\gamma)$ and any $\gamma \in (0,1]$
	\[
	\expect_t[F(\xv_{t+1}) - F(\xv^\star)] \leq \encaser{1 - \frac{\Theta}{\eta }\cdot\frac{\gamma}{\sigma c(\gamma)}}(F(\xv_{t}) - F(\xv^\star))\,.
	\]
\end{theorem}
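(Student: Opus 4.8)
The plan is to mirror the proof of Theorem~\ref{thm:approx-newton}, the single genuinely new ingredient being a change of variables that recasts the affine-invariant subproblem $P_t^{\gamma,\sigma}$ minimized over $\dom$ as the trust-region subproblem $Q_t^\sigma$ minimized over the shrunken domain $\dom^\gamma(\xv_t)$. Let $H_t$ denote the Hessian estimate entering both subproblems (with $H_t = \nabla^2 f(\xv_t)$, $\eta = 1$, in the exact case). For any $\vv \in \dom$ and the associated direction $\Delta\xv = \gamma(\vv - \xv_t)$ one checks the identity $Q_t^\sigma(\gamma(\vv - \xv_t)) = \gamma\, P_t^{\gamma,\sigma}(\vv)$, simply by pulling the scalar $\gamma$ out of the linear term and $\gamma^2$ out of the quadratic term. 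Since $\xv_t + \Delta\xv$ ranges over exactly $\dom^\gamma(\xv_t)$ as $\vv$ ranges over $\dom$, this gives $\min_{\xv_t + \Delta\xv \in \dom^\gamma(\xv_t)} Q_t^\sigma(\Delta\xv) = \gamma \min_{\vv \in \dom} P_t^{\gamma,\sigma}(\vv)$, and shows that the update direction $\Delta\xv_t = \gamma(\sv_t - \xv_t)$ inherits the multiplicative $\Theta$-accuracy of $\sv_t$. Thus everything reduces to running the Theorem~\ref{thm:approx-newton} machinery on the convex trust region $\dom^\gamma(\xv_t)$.

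Next I would establish the two function-value bounds that replace Lemmata~\ref{lem:newconstants} and~\ref{lem:upper-lower-bound}, now phrased through $c(\gamma)$ rather than $d(r)$. For the upper bound I expand $f(\xv_{t+1})$ to second order: Taylor's theorem yields an intermediate point $\zv = (1-\alpha)\xv_t + \alpha\xv_{t+1} = (1-\alpha\gamma)\xv_t + \alpha\gamma\,\sv_t$ for some $\alpha \in [0,1]$, which lies at parameter $\alpha\gamma \leq \gamma$ along the segment $[\xv_t,\sv_t]$. Applying Assumption~\ref{asm:invariant-trust-stability} at this parameter and using that $c(\cdot)$ is non-decreasing gives $\norm{\Delta\xv_t}_{\nabla^2 f(\zv)}^2 \leq c(\gamma)\norm{\Delta\xv_t}_{\nabla^2 f(\xv_t)}^2$, which combined with Assumption~\ref{asm:hessian-quality} produces $F(\xv_{t+1}) - F(\xv_t) \leq Q_t^{\eta c(\gamma)}(\Delta\xv_t)$. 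For the lower bound I perform the same expansion along $[\xv_t,\xv^\star]$ at the point $\xv^\star_\gamma = (1-\gamma)\xv_t + \gamma\xv^\star$, again sitting at parameter $\gamma$, to obtain $F(\xv^\star_\gamma) - F(\xv_t) \geq Q_t^{1/(\eta c(\gamma))}(\xv^\star_\gamma - \xv_t)$.

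With these in hand the chain is routine. Using $\sigma \geq \eta c(\gamma)$ and monotonicity of $Q_t^{(\cdot)}$ in its superscript, the upper bound gives $F(\xv_{t+1}) - F(\xv_t) \leq Q_t^\sigma(\Delta\xv_t)$; taking expectations and invoking the $\Theta$-accuracy together with $Q_t^\sigma(\0)=0$ yields $\expect_t[Q_t^\sigma(\Delta\xv_t)] \leq \Theta\,(Q_t^\sigma)^\star$, the minimum being over $\dom^\gamma(\xv_t)$. I then apply Lemma~\ref{lem:changing-sigma} with $\alpha=\sigma$, $\beta = \eta c(\gamma)$ (legitimate since $\sigma\,\eta c(\gamma) \geq 1$) to trade the superscript $\sigma$ for $1/(\eta c(\gamma))$, evaluate the resulting minimum at the feasible point $\xv^\star_\gamma - \xv_t$, and bound it above by $F(\xv^\star_\gamma) - F(\xv_t)$ via the lower bound. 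Finally, convexity of $F$ gives $F(\xv^\star_\gamma) - F(\xv_t) \leq \gamma\,[F(\xv^\star) - F(\xv_t)]$, so that $\expect_t[F(\xv_{t+1}) - F(\xv_t)] \leq \tfrac{\Theta\gamma}{\eta\,\sigma c(\gamma)}[F(\xv^\star) - F(\xv_t)]$; subtracting $F(\xv^\star)$ from both sides and rearranging produces the claimed contraction factor $1 - \tfrac{\Theta}{\eta}\cdot\tfrac{\gamma}{\sigma c(\gamma)}$.

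The main obstacle I anticipate is verifying the reduction identity cleanly and, in particular, confirming that the Taylor intermediate points land at parameters bounded by $\gamma$, so that the non-decreasing $c(\gamma)$ — rather than some larger global constant — controls both bounds. This is exactly where the affine-invariant, $\gamma$-parameterized form of Assumption~\ref{asm:invariant-trust-stability} is essential, since the radius-based $d(r)$ of Assumption~\ref{asm:local-stable-hess-formal} is not affine invariant. Everything after the reduction is bookkeeping identical in spirit to the proof of Theorem~\ref{thm:approx-newton}.
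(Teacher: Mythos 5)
Your proposal is correct and follows essentially the same route as the paper's proof: the identity $Q_t^{\sigma}(\gamma(\vv-\xv_t)) = \gamma P_t^{\gamma,\sigma}(\vv)$ linking the affine-invariant subproblem to the trust region $\dom^\gamma(\xv_t)$, the upper/lower bounds via Assumption~\ref{asm:invariant-trust-stability} with $c(\gamma)$ in place of $d(r)$, Lemma~\ref{lem:changing-sigma}, evaluation at $\xv^\star_\gamma$, and convexity of $F$. The only difference is presentational (you change variables once at the start rather than carrying the $\gamma$-scaled superscripts through), and your remark that the Taylor intermediate point sits at parameter $\alpha\gamma \leq \gamma$ so that monotonicity of $c(\cdot)$ applies is in fact slightly more careful than the paper's own treatment of that step.
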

\begin{proof}
  First note that applying Assumption \ref{asm:invariant-trust-stability} with $\uv = \xv_t$, $\vv=\sv_t$ and $\wv = \xv_{t+1}$ gives that for any $\alpha \in [0,1]$,
  \[
    \norm{\xv_{t+1} - \xv_t}^2_{\nabla^2 f(\xv_{t})} \leq c(\gamma) \norm{\xv_{t+1} - \xv_t}^2_{\nabla^2 f((1 - \alpha)\xv_{t} + \alpha \xv_t)}\,.
  \]
  We use this to replace \eqref{eqn:sigmaupper} in Lemma \ref{lem:upper-lower-bound} and obtain the following upper bound on the function value
  \[
    f(\xv_{t+1}) - f(\xv_t) \leq Q^{\eta c(\gamma)}_t(\xv_{t+1} - \xv_t)\,.
  \]
  Similarly for $\xv^\star_\gamma = (1-\gamma)\xv_t + \gamma \xv^\star$, we replace \eqref{eqn:sigmalower} with
  \[
    f(\xv^\star_\gamma) - f(\xv_t) \geq Q^{1/(\eta c(\gamma))}_t(\xv^\star_\gamma - \xv_t)\,.
  \]
  \begin{align*}
    f(\xv_{t+1}) - f(\xv_t) &\leq Q^{\eta \sigma}_t(\xv_{t+1} - \xv_t) \\
    &= \dotprod{\nabla f(\xv_t)}{\xv_{t+1} - \xv_t} + \frac{\sigma}{2}\norm{\xv_{t+1} - \xv_t}_{\nabla^2 f(\xv_t)}^2 \\
    &= \gamma\dotprod{\nabla f(\xv_t)}{\sv_{t} - \xv_t} + \frac{\gamma^2 \sigma}{2}\norm{\sv_{t} - \xv_t}_{\nabla^2 f(\xv_t)}^2 \\
    &= \gamma P_t^{\gamma,\sigma}(\sv_t)\,.
  \end{align*}
Now we will use that $\sv_t$ was approximated to $\Theta$ accuracy to get that
\begin{align*}
    f(\xv_{t+1}) - f(\xv_t) &\leq \gamma P_t^{\gamma,\sigma}(\sv_t)\\
    &\leq \gamma\Theta \min_{\yv \in \dom}P_t^{\gamma,\sigma}(\yv)\\
    &= \gamma\Theta \min_{\yv \in \dom}Q_t^{\gamma \sigma}(\yv)\,.
\end{align*}
Now let us use Lemma \ref{lem:changing-sigma} to go from the upper bound to the lower bound. For the lemma to be applicable, it is crucial that $\sigma \eta c(\gamma) \geq 1$.
\begin{align*}
  f(\xv_{t+1}) - f(\xv_t) &\leq \gamma\Theta \min_{\yv \in \dom}Q_t^{\gamma \sigma}(\yv)\\
  &\leq \gamma\Theta \cdot \frac{1}{\sigma \eta c(\gamma)}\min_{\yv \in \dom}Q_t^{\gamma/(c(\gamma)\eta)}(\yv) \\
  &= \frac{\Theta}{\sigma \eta c(\gamma)} \gamma Q_t^{\gamma/(c(\gamma)\eta)}(\xv^\star)\\
  &= \frac{\Theta}{\sigma \eta c(\gamma)} \encase{\gamma\dotprod{\nabla f(\xv_t)}{\xv^\star - \xv_t} + \frac{\gamma^2}{2\eta c(\gamma)}\norm{\xv^\star - \xv_t}^2_{\nabla^2 f(\xv_t)}}\\
  &= \frac{\Theta}{\sigma \eta c(\gamma)} \encase{\dotprod{\nabla f(\xv_t)}{\xv^\star_\gamma - \xv_t} + \frac{1}{2\eta c(\gamma)}\norm{\xv^\star_\gamma - \xv_t}^2_{\nabla^2 f(\xv_t)}}\\
  &= \frac{\Theta}{\sigma \eta c(\gamma)}Q_t^{1/(c(\gamma)\eta)}(\xv^\star_\gamma)\\
  &= \frac{\Theta}{\sigma \eta c(\gamma)}[f(\xv^\star_\gamma) - f(\xv_t)]\,.
\end{align*}
Now we will use the convexity of $f(\xv)$ and the definition of $\xv^\star_\gamma$ to note that
\begin{align*}
  f(\xv_{t+1}) - f(\xv_t) &\leq \frac{\Theta}{\sigma \eta c(\gamma)}[f(\xv^\star_\gamma) - f(\xv_t)]\\
  &\leq \frac{\Theta}{\sigma \eta c(\gamma)}[f((1- \gamma)\xv_t + \gamma \xv^star) - f(\xv_t)]\\
  &\leq \frac{\Theta}{\sigma \eta c(\gamma)}[(1- \gamma)f(\xv_t) + \gamma f(\xv^star) - f(\xv_t)]\\
  &= \frac{\Theta\gamma}{\sigma \eta c(\gamma)}[f(\xv^\star) - f(\xv_t)]
\end{align*}
Adding and subtracting $f(\xv^\star)$ on the left hand side, rearranging the terms and iterating over $t$ finishes the proof.
\end{proof}
\end{document}